\definecolor{PrimaryColor}{HTML}{51247A}
\definecolor{SecondaryColor}{HTML}{6F52E7}
\newcommand{\xmark}{\ding{55}}
\lstdefinestyle{prompt}{
    breaklines=true,
    breakatwhitespace=false,
    basicstyle=\tiny\ttfamily,
    frame=none,
    columns=flexible,
    breakindent=0pt,
    literate={—}{{-}}1
}
\newcommand*\myfontsize{%
  \@setfontsize\myfontsize{7}{8}%
}
\newtheorem{theorem}{Theorem}
\newtheorem{proposition}{Proposition}
\newtheorem{lemma}{Lemma}
\newtheorem{definition}{Definition}
\newtheorem*{problem}{Problem}
\newtheorem{assumption}{Assumption}
\titleformat{\paragraph}[runin]{\bfseries\color{PrimaryColor}}{}{0pt}{#1}[]
\renewcommand{\myparagraph}[1]{\par\noindent\textbf{\color{PrimaryColor}#1}.}
\definecolor{darkgreenrank}{RGB}{0,100,0}
\definecolor{darkbluerank}{RGB}{0,70,140}
\definecolor{darkorangerank}{RGB}{180,90,0}
\definecolor{bestblue}{HTML}{E8F2FF}
\title{%
\begin{tikzpicture}[baseline=(logo.base)]
  \node[
    inner sep=0pt, inner ysep=3pt,
    font=\fontsize{24}{28}\selectfont\bfseries\scshape,
    text=PrimaryColor
  ] (logo) {Catch-Only-One};
  \draw[PrimaryColor, line width=0.6pt] (logo.south west) -- (logo.south east);
\end{tikzpicture}\\[4pt]
{\fontsize{15.5}{18}\selectfont\color{PrimaryColor} Non-Transferable Examples for Model-Specific Authorization}%
}
\runningtitle{Non-Transferable Examples for Model-Specific Authorization}
\def\@author{\parbox{\linewidth}{\centering\color{black}%
\begin{tabular}{@{}c@{\hspace{2em}}c@{\hspace{2em}}c@{}}
\textbf{Zihan Wang}$^{1,2,\dagger}$ & \textbf{Zhiyong Ma}$^{1,\ddagger}$ & \textbf{Zhongkui Ma}$^{1}$ \\
\hspace*{-0.7em}\textbf{Shuofeng Liu}$^{1}$ & \hspace*{-0.7em}\textbf{Akide Liu}$^{4}$ & \textbf{Derui Wang}$^{2}$
\end{tabular}\\
\textbf{Minhui Xue}$^{2}$ \quad\quad
\textbf{Guangdong Bai}$^{3}$\\[2pt]
$^1$The University of Queensland \;
$^2$CSIRO \;
$^3$City University of Hong Kong \;
$^4$Monash University \\
{\small
\texttt{\{zihan.wang,ethan.ma,zhongkui.ma,shuofeng.liu\}@uq.edu.au}\\[-2pt]
\texttt{~akide.liu@monash.edu}\quad
\texttt{\{derui.wang, minhui.xue\}@csiro.au}\\[-3pt]
\texttt{g.bai@cityu.edu.hk}%
}%
}}
\begin{document}

\begin{abstract}
Recent AI regulations increasingly emphasize the need for mechanisms that preserve the utility of data for AI innovation while preventing misuse, particularly by enforcing \emph{purpose limitation} in downstream AI applications.
In practice, enforcing this principle remains challenging, as released data can be trivially fed into arbitrary models beyond its declared intent.
Existing approaches attempt to mitigate this risk by either perturbing data or retraining models to limit unintended use.
These strategies, however, offer no protection against inference by unknown or externally trained models, or fundamentally rely on control over the training or deployment.

In this work, we introduce \emph{\underline{n}on-\underline{t}ransferable \underline{e}xamples} (\codename{}s), recoded data that act as a task-level ``ciphertext'' decodable \emph{only} by a designated model.
Whereas adversarial examples exploit directions of high model sensitivity, \codename{}s leverage the complementary \emph{insensitive} subspace.
We propose a training-free, data-agnostic method that recodes data within a model-specific low-sensitivity subspace, preserving outputs for the authorized model while degrading unauthorized ones through subspace misalignment.
We establish formal bounds certifying authorized-model fidelity and showing that unauthorized degradation scales with measurable spectral misalignment between models.
Empirically, \codename{}s preserve performance across diverse vision backbones and state-of-the-art vision-language models under common preprocessing, while unauthorized models collapse even under adaptive reconstruction attacks.
These results establish \codename{}s as a practical means to preserve intended data utility while preventing unauthorized exploitation.

\vspace{0.2cm}
\textbf{Date}: May 29, 2026

\textbf{Project Page}: \url{https://trusted-system-lab.github.io/model-specificity/}
\end{abstract}

\maketitle

\begingroup
\renewcommand{\thefootnote}{}
\footnotetext{$^\dagger$Zihan Wang is supported by the Google PhD Fellowship. \quad $^\ddagger$A tribute to Zhiyong (Ethan) Ma's master's journey at UQ.}
\endgroup
\setcounter{footnote}{0}

\section{Introduction}
\label{sec:introduction}

{\setlength{\leftskip}{1.5cm}\setlength{\rightskip}{1.5cm}\color{PrimaryColor}
\textit{The title alludes to Joseph Heller's {Catch-22}, which symbolizes an
unavoidable paradox. We adapt this notion in {Catch-Only-One} (or {Catch-11})
to capture the paradox that shared data may appear universally accessible yet
remains usable only by a single authorized model.}
\par}

Recent regulatory initiatives such as the EU AI Act~\cite{EU_AI_Act_2024} and the US AI Action Plan~\cite{USAIAction2025}, alongside industry frameworks such as DeepMind's Frontier Safety Framework~\cite{google25strength}, all enforce \emph{purpose limitation}: data must be used strictly in accordance with its declared purpose throughout the AI lifecycle, including as inputs to AI systems at inference or generation time.
Inputs such as prompts and uploaded documents must remain useful for authorized innovation while protected from misuse.

Despite this consensus, enforcement of purpose limitation is rarely satisfactory once data is released.
Existing data safeguards largely rely on \emph{administrative} controls (access restrictions, licenses, or terms of use) that cannot prevent shared data from being scraped, aggregated, and consumed by any AI application without consent.
A small number of paintings can suffice to replicate an artist's style~\cite{heikkila2022art,gal2022image}; medical scans shared for research can be exploited by membership inference attacks~\cite{shokri2017membership}.
The real-world cost is tangible: a class-action lawsuit compelled Anthropic to pay over \$1.5B and erase pirated data~\cite{anthropic}, and European regulators have imposed roughly €100M in fines on Clearview AI for unlawful facial recognition matching uploaded face images against a huge unlicensed biometric database~\cite{noyb_clearview_criminal_complaint_2025}.

This situation calls for an enforceable \emph{post-release} guarantee: released data should remain fully usable \emph{only} by its intended recipient (the authorized model) while withholding utility from others.
The protection must not assume control over downstream training or deployment, since data owners cannot dictate who obtains the data or which models consume it, and it must be lightweight enough for real-world AI pipelines.
Three lines of existing research span a trade-off (lightweight protection that fails at inference, targeted suppression that requires retraining, or strong confidentiality at prohibitive cost), but none acts directly at inference on already-released data.
\emph{Data obfuscation}~\cite{ungeneralizable,wang2025provably} perturbs data prior to release so that standard learning pipelines degrade or fail to converge, but offers no guarantees once data is directly consumed at inference time.
\emph{Model-side restriction}~\cite{wang2022nontransferable,ijcai2025p1161} alters model objectives or parameters to suppress transfer in specific domains, requiring retraining with custom losses and control of the deployment pipeline.
\emph{Fully homomorphic encryption} (FHE)~\cite{gentry2009fully} enables inference on encrypted data, but its prohibitive computational and memory costs make it impractical for routine applications such as online media, healthcare workflows, or MLaaS services~\cite{ribeiro2015mlaas}.

\myparagraph{Our work}
Once data leaves its owner, it may be consumed by countless models. We recode released content into a task-level ``ciphertext''\footnote{We use ``ciphertext'' as an analogy: \codename{}s are obfuscated, model-specific recodings, not cryptographic encryption.} that only a designated authorized model $\fauth$ can decode, giving a \emph{post-release} technical control for purpose limitation.
We call the recoded inputs \emph{\underline{n}on-\underline{t}ransferable \underline{e}xamples} (\codename{}s), denoted $\tilde{x}$: the recoding constructs an authorization channel from insensitive directions of $\fauth$ and embeds model-specific signals that are benign to $\fauth$ but disruptive to others.

Our design exploits a structural property of neural networks: early layers are
sensitive along only a few directions, leaving a high-dimensional
\emph{insensitive subspace} specific to $\fauth$'s weights that misaligns
across models. This is the dual of the adversarial vulnerability behind adversarial examples~\cite{Su_2019,adv2015goodfellow}: where perturbations along \emph{sensitive} directions can drastically alter predictions, perturbations confined to the complementary insensitive set leave $\fauth$ undisturbed.
Using a small probe budget, we estimate a spectral basis for $\fauth$'s insensitive subspace and recode $x$ by adding a calibrated perturbation confined to this basis ($x\!\to\!\tilde{x}$).
By construction, \ntes preserve data utility for $\fauth$; the high dimensionality of the admissible set supports randomized per-sample recoding that disrupts transfer to unauthorized models and resists inversion empirically, including by adaptive adversaries trained on many $\tilde{x}$.
The procedure is training-free and data-agnostic, requiring no model modification and no knowledge of downstream models, and runs as a drop-in preprocessing step with negligible overhead, making it practical for MLaaS~\cite{ribeiro2015mlaas} deployments where user inputs are routinely replicated and repurposed~\cite{gal2022image}.

We prove two bounds for \codename{}s. For the authorized model, recoding perturbs
first-layer outputs by at most the spectral threshold $\tau$, negligible under
spectral flatness~\cite{vershynin2018high}.
For any unauthorized model, a cross-model bound shows degradation is
lower-bounded by the subspace misalignment $\alpha$, a quantity measurable
from model weights alone; the wider the architectural gap, the stronger the
separation guarantee.
Both bounds extend to the full network via Lipschitz composition, certifying
a prediction-level separation.
Empirically, we evaluate \codename{}s across five representative vision backbones
(\texttt{ResNet}, \texttt{ViT}, \texttt{SwinV2}, \texttt{DeiT},
\texttt{MambaVision}) and on \texttt{Qwen2.5-VL} and \texttt{InternVL3}
over MMBench~\cite{mmbench}, spanning perception and reasoning abilities
(coarse and fine-grained perception; attribute, relation, and logic reasoning).
The authorized model retains near-clean accuracy while unauthorized models
collapse to near-chance: at 0\,dB PSNR on ImageNet, authorized ResNet-50
top-1 stays at 80.2\% (from 80.3\% clean) while all others drop to
$\approx 0.1\%$.
\codename{}s also resist common preprocessing and reconstruction attacks.

\myparagraph{Contributions} We summarize our main contributions as follows:
\begin{itemize}[labelwidth=0.35cm, leftmargin=!]
\item \textbf{A new problem setting.} 
{We formulate \emph{model-specific data authorization}: user-provided data should remain fully usable for an authorized model while withholding utility from any other models. 
We introduce \emph{non-transferable examples (\codename{}s)}, a model-specific data representation that can be produced via a lightweight recoding process.}
\item \textbf{A formal framework and theoretical analysis.} We establish a strong formal foundation for \codename{}s with two bounds: authorized performance is preserved, while unauthorized performance is provably degraded.
\item \textbf{An empirical evaluation.} Across diverse model architectures and data modalities, authorized models retain full utility while unauthorized models drop to near-chance, even under common preprocessing, learned reconstruction, and supervised retraining attackers, at sub-millisecond encoding.
\end{itemize}

\section{Problem Formulation}
\label{sec:problem-formulation}

\subsection{Preliminaries}
\label{sec:neural-networks}

\myparagraph{Neural Networks} A feedforward network $f: \mathbb{R}^n \to \mathbb{R}^m$ alternates linear maps with pointwise nonlinearities. Convolutional layers admit equivalent fully connected forms (Appendix~\ref{appendix:convolution}). Absorbing biases $b^{(i)}$ into $W^{(i)}$ by appending a constant $1$ to $x$ yields
\begin{gather*}
    y = f(x) = \phi(W^{(n)} \cdot \phi(W^{(n-1)} \cdot \cdots \phi(W^{(2)} \cdot \phi(W^{(1)} x)) \cdots)),
\end{gather*}
where $\phi$ is an activation function (\eg ReLU, sigmoid, tanh).

\begin{definition}[First-layer operator]
\label{def:first-layer}
For any feedforward network $f$ with first linear layer $W^{(1)}$ followed by nonlinearity $\phi$, the \emph{first-layer operator} is $W(f) := W^{(1)}$. We write $\Wauth := W(\fauth)$, with $\fauth$ formalized below. Convolutional first layers are instantiated by their unfolded matrix and transformer first layers by the post-embedding projection (Appendices~\ref{appendix:convolution},~\ref{appendix:token-embedding}).
\end{definition}

\myparagraph{Nullspace} The nullspace $\mathrm{Null}(W)=\{x\in\mathbb{R}^n \mid Wx=0\}$ of a linear map $W$ collects directions sent to zero, so any $\delta\in\mathrm{Null}(W)$ leaves the layer output invariant: $W(x+\delta)=Wx$.

\subsection{Model-Specific Data Authorization}
\label{sec:model_specific_authorization}

We consider a supervised task with ambient input space $\mathcal{X}$ and output space $\mathcal{Y}$.
A dataset $\mathcal{D}$ induces the task domains $\mathcal{X}_\mathcal{D}\subset\mathcal{X}$ and $\mathcal{Y}_\mathcal{D}\subset\mathcal{Y}$ observed in practice.
Training on $\mathcal{D}$ yields a family of models $\mathcal{F}_\mathcal{D}$, where each $f\in\mathcal{F}_\mathcal{D}$ implements a mapping $f:\mathcal{X}_\mathcal{D}\to\mathcal{Y}_\mathcal{D}$.
To quantify the \emph{usability} of $f$ on labeled inputs $(x,y)\sim P_\mathcal{D}$, we adopt a label-aware performance metric $\mu:\mathcal{F}_\mathcal{D}\times\mathcal{X}\times\mathcal{Y}\to[0,1]$ (\eg accuracy indicator for classification, log-likelihood for generative decoding). The expected utility is $\expmet{f}{\mathcal{X}_\mathcal{D}} := \mathbb{E}_{(x,y)\sim P_\mathcal{D}}[\permet{f}{x}{y}]$.
Larger values of $\mu$ indicate better usability.

At inference on new data outside the training set, the data owner (defender) seeks to release content (\eg images, speech, documents) that is correctly processed by a designated authorized model $\fauth$ while remaining unusable to any unauthorized model $\funa\in\mathcal{F}_\mathcal{D}\setminus\{\fauth\}$.

We formulate \emph{model-specific data authorization}: preserve the usability of $\fauth$ while degrading that of any unauthorized model, without retraining and without assumptions on unauthorized models. Let $\tilde{x}=\mathcal{T}(x;\omega)$ be a randomized input recoding with $\mathcal{T}:\mathcal{X}\times\Omega\to\mathcal{X}$ and per-release seed $\omega\sim\mathbb{Q}$.
The recoded set $\Xrec := \{\mathcal{T}(x;\omega) : x \in \mathcal{X}_\mathcal{D},\, \omega\sim\mathbb{Q}\}$ defines \textit{non-transferable examples} (\codename{}s), a model-specific data representation tailored to $\fauth$; each $\tilde{x}$ inherits the label $y$ of its underlying clean $x$, and $\expmet{f}{\Xrec} := \mathbb{E}_{(x,y)\sim P_\mathcal{D},\,\omega\sim\mathbb{Q}}[\permet{f}{\mathcal{T}(x;\omega)}{y}]$. Formally:

\begin{problem}[$(\rho,\gamma)$-Model-Specific Data Authorization]
Given an authorized-utility tolerance $\rho\ge 0$ and an unauthorized separation margin $\gamma>0$, a randomized recoding $\mathcal{T}:\mathcal{X}\times\Omega\to\mathcal{X}$ is \emph{$(\rho,\gamma)$-model-specific} to $\fauth$ if, for every $\funa\in\mathcal{F}_\mathcal{D}\setminus\{\fauth\}$,
\begin{align}
\text{(authorized-utility retention)} \quad &
\expmet{\fauth}{\mathcal{X}_\mathcal{D}} - \expmet{\fauth}{\Xrec} \le \rho, \label{eq:retention}\\
\text{(unauthorized-utility degradation)} \quad &
\expmet{\fauth}{\Xrec} - \expmet{\funa}{\Xrec} \ge \gamma. \label{eq:degradation}
\end{align}
\end{problem}

\subsection{Threat Model}
\label{sec:problem-formulation:threat}

The data owner (defender), aligned with the data processor, has white-box access to $\fauth$ and uses task-domain data to instantiate the recoding $\mathcal{T}$. Model-specific parameters of $\mathcal{T}$ (\eg spectral basis, thresholds) remain private even when its algorithmic form is public. The recoding is lightweight and runs before release, producing $\tilde{x}=\mathcal{T}(x;\omega)$ that is shared or consumed downstream without disclosing $\fauth$. The defender has no access to unauthorized models.

The adversary operates \emph{unauthorized} models $\funa\in\mathcal{F}_\mathcal{D}\setminus\{\fauth\}$ and receives only $\tilde{x}$.
Before inference, they may apply an input-side preprocessing operator $\mathcal{A}\in\mathbb{A}$, where $\mathbb{A}$ is a class of standard preprocessing pipelines, yielding $\tilde{x}' := \mathcal{A}(\tilde{x})$.
The exact parameters of $\mathcal{T}$ are hidden.
The adversary's objective is to \emph{maximize} the task metric $\permet{\funa}{\tilde{x}'}{y}$ under fixed compute or query budgets. Specifically, we consider three adversary classes:
(\textit{i}) \emph{General Adversary \textbf{(GA)}:} any unauthorized $\funa$ with arbitrary architecture and parameters; (\textit{ii}) \emph{Transfer-match Adversary \textbf{(TA)}:} a GA sharing the architecture of $\fauth$ but with different weights; (\textit{iii}) \emph{Adaptive Adversary \textbf{(AA)}:} a GA/TA that optimizes the prediction pipeline, including learning $\mathcal{A}\in\mathbb{A}$ and updating $\funa$ (via \emph{fine-tuning} or \emph{retraining from scratch}), to maximize $\permet{\funa}{\tilde{x}'}{y}$.

\myparagraph{Scope}
This setting is practical as the enforcement process is executed locally by the data owner (or delegated to an authorized third party), requiring the model provider to disclose only minimal information. With white-box access to $\fauth$, the owner integrates $\mathcal{T}$ as an upstream service in the data pipeline, ensuring only $\tilde{x}$ is released.
In practice, this can be deployed as a middleware gateway in commercial pipelines (\eg an MLaaS front-end) that recodes inputs prior to storage or sharing, requiring no modifications to downstream training or inference systems. From a governance perspective, this establishes an enforceable \emph{ex ante} technical control for purpose-limitation obligations in settings where \emph{ex post} monitoring of downstream reuse is intractable.
Neither the weights of $\fauth$ nor the model-specific parameters of $\mathcal{T}$ are disclosed to downstream recipients. 

\myparagraph{Design Justifications}
Limited white-box access to $\fauth$ (\eg gradients or partial weights) is highly consistent with delegated auditing obligations under EU AI Act Annex IV~\cite{EU_AI_Act_2024}, SR~11-7~\cite{sr11_7}, and GDPR Art.~15~\cite{gdpr_1} (Appendix~\ref{appendix:wb-justification}). \codename{}s are also deliberately not human-readable: $\tilde{x}$ acts as a task-level ``ciphertext'' that only $\fauth$ can decode (Appendix~\ref{appendix:visual-effect}).

\section{Our Method}
\label{sec:method}

Neural networks typically begin with a linear feature extractor (\eg a convolution with weight sharing or a token embedding), which reduces redundancy because input coordinates are often correlated (see Appendices~\ref{appendix:pca} and \ref{appendix:svd}).
This motivates \emph{input-side} perturbations that lie in an \emph{insensitive} subspace of the authorized model's first linear map: these directions are nearly inert for the authorized model but, due to subspace misalignment across models, can induce nontrivial changes for unauthorized ones.

\subsection{Insensitive Subspace Identification}
Let $W := \Wauth$ denote the first linear transformation of the authorized model (Definition~\ref{def:first-layer}, bias omitted). Since input dimensionality is typically high, $W$ has a nontrivial nullspace, and any $\delta\in\mathrm{Null}(W)$ satisfies $W(x+\delta)=Wx$, hence $f(x+\delta)=f(x)$.

We relax exact nulling to controlled feature deviation, requiring $W\delta\approx 0$ rather than $W\delta=0$; perturbations in such an insensitive subspace induce only small first-layer changes, especially under intervening nonlinearities (\eg ReLU truncation, sigmoid/tanh saturation). For $W\in\mathbb{R}^{m\times n}$ with singular value decomposition (SVD) $W=U S V^\top$, the nullspace is spanned by right singular vectors with zero singular values; we extend this to the \emph{$\tau$-insensitive subspace}, the span of right singular vectors whose singular values are at most a threshold $\tau>0$. We formalize it as follows.

\begin{definition}[$\tau$-insensitive Subspace]
\label{def:insensitive-subspace}
    Let $W \in \mathbb{R}^{m \times n}$ have SVD, $W = U S V^\top$, with singular values ordered $s_1 \le s_2 \le \cdots \le s_n$.
    Given a spectral threshold $\tau>0$, the $\tau$-insensitive subspace is
    \begin{gather}
        \Inst{W}
        = \mathrm{span}\{ v_1, v_2, \dots, v_k \},
    \end{gather}
    where $k=\max\{ i \mid s_i \le \tau \}$, and $v_i$ denotes the $i$-th column of \,$V$ corresponding to $s_i$ in $S$.
\end{definition}

\noindent This construction naturally {captures} the nullspace, and $\Inst{W} \supseteq \mathrm{Null}(W)$ for any $\tau \ge 0$.

\subsection{Non-Transferable Examples}
\label{sec:method:ne_construct}
Inheriting Definition~\ref{def:insensitive-subspace}, let $\Vsmall \in \mathbb{R}^{n \times k}$ collect the right singular vectors of $W$ corresponding to singular values $s_i \le \tau$, so $\mathrm{range}(\Vsmall) = \Inst{W}$.
We sample a vector $z \in \mathbb{R}^k$ (\eg \textit{i.i.d.} Gaussian, structured pattern, or content-dependent code) and lift it into $\Inst{W}$ via $\delta = \Vsmall z$.
The perturbation $\delta$ is added to the original input $x$ to obtain a recoded input $\tilde{x} = x + \delta$, which forms an \codename{}.
Since $\Vsmall$'s columns are orthonormal and their associated singular values are bounded by $\tau$, $\|W\delta\|_2 \le \tau \|z\|_2$.
By the distributive law of matrix multiplication, $W \tilde{x} = W (x + \delta) = W x + W \delta$, where $W \delta$ is small and thus $W \tilde{x}$ is close to $W x$. Figure~\ref{fig:misalignment-2d} visualizes this construction. For any unauthorized model $\funa$ with first-layer operator $W'$, the same $\delta$, inert to $\fauth$'s sensitive directions, generally lands on $W'$'s sensitive directions whenever the two right-singular bases disagree, previewing the cross-model degradation formalized in Section~\ref{sec:cross-model-degradation}.


\begin{figure}[t]
\centering
\begin{subfigure}[c]{0.32\linewidth}
\centering
\begin{tikzpicture}[scale=1.55,>=stealth,font=\footnotesize]
  \def\rotang{35}
  \def\xang{22}
  \def\xlen{0.75}
  \fill (0,0) circle (0.022);
  \draw[->, blue!75!black, opacity=0.30, line width=0.5pt] (0,0) -- (0,1.37);
  \node[blue!75!black, opacity=0.4, above] at (0,1.37) {$v_1$};
  \draw[->, blue!75!black, thick] (0,0) -- (1.37,0);
  \node[blue!75!black, right] at (1.37,0) {$v_2$};
  \draw[->, red!75!black, opacity=0.30, line width=0.5pt] (0,0) -- ({-1.37*sin(\rotang)},{1.37*cos(\rotang)});
  \node[red!75!black, opacity=0.4, above] at ({-1.37*sin(\rotang)},{1.37*cos(\rotang)}) {$v'_1$};
  \draw[->, red!75!black, thick] (0,0) -- ({1.37*cos(\rotang)},{1.37*sin(\rotang)});
  \node[red!75!black, right] at ({1.37*cos(\rotang)},{1.37*sin(\rotang)}) {$v'_2$};
  \coordinate (xtip) at ({\xlen*cos(\xang)},{\xlen*sin(\xang)});
  \draw[->, black, line width=1.2pt] (0,0) -- (xtip);
  \node[black, right, yshift=-2pt] at (xtip) {$x$};
  \coordinate (fv2) at ({\xlen*cos(\xang)},0);
  \draw[dashed, blue!75!black, opacity=0.55, thin] (xtip) -- (fv2);
  \fill[blue!75!black] (fv2) circle (0.022);
  \begin{scope}[shift={(fv2)}]
    \draw[blue!75!black, opacity=0.55, thin] (-0.05,0) -- (-0.05,0.05) -- (0,0.05);
  \end{scope}
  \draw[blue!75!black, line width=0.7pt]
    ({0.5*\xlen*cos(\xang)},-0.04) -- ({0.5*\xlen*cos(\xang)},0.04);
  \node[blue!75!black, below, font=\scriptsize] at (fv2) {$\langle v_2, x\rangle$};
  \coordinate (fvp2) at ({\xlen*cos(\xang-\rotang)*cos(\rotang)},{\xlen*cos(\xang-\rotang)*sin(\rotang)});
  \draw[dashed, red!75!black, opacity=0.55, thin] (xtip) -- (fvp2);
  \fill[red!75!black] (fvp2) circle (0.022);
  \begin{scope}[shift={(fvp2)}, rotate=\rotang]
    \draw[red!75!black, opacity=0.55, thin] (-0.05,0) -- (-0.05,-0.05) -- (0,-0.05);
  \end{scope}
  \node[red!75!black, above left, font=\scriptsize] at (fvp2) {$\langle v'_2, x\rangle$};
\end{tikzpicture}
\end{subfigure}
\hfill
\begin{subfigure}[c]{0.32\linewidth}
\centering
\begin{tikzpicture}[scale=1.55,>=stealth,font=\footnotesize]
  \def\rotang{35}
  \def\xang{22}
  \def\xlen{0.75}
  \def\dlen{0.85}
  \fill[forestgreen, opacity=0.08]
    (0,0) --
    ({\xlen*cos(\xang)},{\xlen*sin(\xang)}) --
    ({\xlen*cos(\xang)},{\xlen*sin(\xang) + \dlen}) --
    (0,\dlen) --
    cycle;
  \fill (0,0) circle (0.022);
  \draw[->, blue!75!black, opacity=0.30, line width=0.5pt] (0,0) -- (0,1.37);
  \node[blue!75!black, opacity=0.4, above] at (0,1.37) {$v_1$};
  \draw[->, blue!75!black, thick] (0,0) -- (1.37,0);
  \node[blue!75!black, right] at (1.37,0) {$v_2$};
  \draw[->, red!75!black, opacity=0.30, line width=0.5pt] (0,0) -- ({-1.37*sin(\rotang)},{1.37*cos(\rotang)});
  \node[red!75!black, opacity=0.4, above] at ({-1.37*sin(\rotang)},{1.37*cos(\rotang)}) {$v'_1$};
  \draw[->, red!75!black, thick] (0,0) -- ({1.37*cos(\rotang)},{1.37*sin(\rotang)});
  \node[red!75!black, right] at ({1.37*cos(\rotang)},{1.37*sin(\rotang)}) {$v'_2$};
  \coordinate (xtip) at ({\xlen*cos(\xang)},{\xlen*sin(\xang)});
  \draw[->, black!35, line width=1.0pt, dashed] (0,0) -- (xtip);
  \node[black!50, font=\scriptsize, right, yshift=-2pt] at (xtip) {$x$};
  \coordinate (xttip) at ({\xlen*cos(\xang)},{\xlen*sin(\xang) + \dlen});
  \draw[->, forestgreen, line width=0.9pt] (0,0) -- (0,\dlen);
  \node[forestgreen, right, font=\scriptsize] at (0.02, 0.6*\dlen) {$\delta$};
  \coordinate (fv2) at ({\xlen*cos(\xang)},0);
  \draw[dashed, blue!75!black, opacity=0.30, thin] (xtip) -- (fv2);
  \draw[dashed, blue!75!black, opacity=0.55, thin] (xttip) -- (fv2);
  \fill[blue!75!black] (fv2) circle (0.022);
  \begin{scope}[shift={(fv2)}]
    \draw[blue!75!black, opacity=0.55, thin] (-0.05,0) -- (-0.05,0.05) -- (0,0.05);
  \end{scope}
  \draw[blue!75!black, line width=0.7pt]
    ({0.5*\xlen*cos(\xang)},-0.04) -- ({0.5*\xlen*cos(\xang)},0.04);
  \node[blue!75!black, below, font=\scriptsize] at (fv2) {$\langle v_2, \tilde x\rangle$};
  \coordinate (fvp2old) at ({\xlen*cos(\xang-\rotang)*cos(\rotang)},{\xlen*cos(\xang-\rotang)*sin(\rotang)});
  \draw[dashed, red!75!black, opacity=0.30, thin] (xtip) -- (fvp2old);
  \fill[red!75!black, opacity=0.4] (fvp2old) circle (0.020);
  \begin{scope}[shift={(fvp2old)}, rotate=\rotang]
    \draw[red!75!black, opacity=0.30, thin] (-0.05,0) -- (-0.05,-0.05) -- (0,-0.05);
  \end{scope}
  \coordinate (fvp2new) at ({(\xlen*cos(\xang-\rotang)+\dlen*sin(\rotang))*cos(\rotang)},{(\xlen*cos(\xang-\rotang)+\dlen*sin(\rotang))*sin(\rotang)});
  \draw[dashed, red!75!black, opacity=0.55, thin] (xttip) -- (fvp2new);
  \fill[red!75!black] (fvp2new) circle (0.022);
  \begin{scope}[shift={(fvp2new)}, rotate=\rotang]
    \draw[red!75!black, opacity=0.55, thin] (-0.05,0) -- (-0.05,0.05) -- (0,0.05);
  \end{scope}
  \node[red!75!black, font=\scriptsize, anchor=north west, xshift=-2pt, yshift=-1pt] at (fvp2new) {$\langle v'_2, \tilde x\rangle$};
  \draw[<->, >=stealth, orange!90!red, line width=0.8pt]
    ({(\xlen*cos(\xang-\rotang)+0.02)*cos(\rotang)},{(\xlen*cos(\xang-\rotang)+0.02)*sin(\rotang)}) --
    ({(\xlen*cos(\xang-\rotang)+\dlen*sin(\rotang)-0.02)*cos(\rotang)},{(\xlen*cos(\xang-\rotang)+\dlen*sin(\rotang)-0.02)*sin(\rotang)});
  \begin{scope}[shift={(fvp2old)}, rotate=\rotang]
    \draw[orange!90!red, line width=0.8pt] (0,-0.06) -- (0,0.06);
  \end{scope}
  \begin{scope}[shift={(fvp2new)}, rotate=\rotang]
    \draw[orange!90!red, line width=0.8pt] (0,-0.06) -- (0,0.06);
  \end{scope}
  \draw[->, black, line width=1.4pt] (0,0) -- (xttip);
  \node[black, above right] at (xttip) {$\tilde x$};
\end{tikzpicture}
\end{subfigure}
\hfill
\begin{subfigure}[c]{0.32\linewidth}
\centering
\begin{tikzpicture}[>=stealth, font=\scriptsize]
  \begin{scope}[local bounding box=legendcontent]
  \begin{scope}[shift={(0.04, 1.30)}]
    \draw[->, blue!75!black, line width=0.15pt] (0,0) -- (0,0.20);
    \draw[->, blue!75!black, line width=0.15pt] (0,0) -- (0.20,0);
  \end{scope}
  \node[anchor=west, yshift=3pt] at (0.30, 1.30) {$\fauth$ (authorized)};
  \begin{scope}[shift={(0.04, 0.90)}]
    \draw[->, red!75!black, line width=0.15pt] (0,0) -- (0,0.20);
    \draw[->, red!75!black, line width=0.15pt] (0,0) -- (0.20,0);
  \end{scope}
  \node[anchor=west, yshift=3pt] at (0.30, 0.90) {$\funa$ (unauthorized)};
  \draw[->, forestgreen, line width=0.15pt] (0.04, 0.40) -- (0.04, 0.60);
  \node[anchor=west] at (0.30, 0.50) {Recoding Addition};
  \draw[<->, >=stealth, orange!90!red, line width=0.8pt] (0.02, 0.10) -- (0.70, 0.10);
  \draw[orange!90!red, line width=0.8pt] (0, 0.04) -- (0, 0.16);
  \draw[orange!90!red, line width=0.8pt] (0.72, 0.04) -- (0.72, 0.16);
  \node[anchor=west] at (0.78, 0.10) {Bounded Gap};
  \end{scope}
  \draw[rounded corners=2.5pt, gray!45, line width=0.4pt]
    ([xshift=-6pt, yshift=-6pt]legendcontent.south west)
    rectangle
    ([xshift=6pt, yshift=6pt]legendcontent.north east);
\end{tikzpicture}
\end{subfigure}
\caption{Two-dimensional intuition: recoding $\delta$ along $\fauth$'s insensitive direction $v_1$ preserves the $v_2$-component (Thm.~\ref{thm:target-retention}) while sliding the $v'_2$-component by $\sin\theta\,\|\delta\|$, degrading $\funa$ (Thm.~\ref{thm:cross-model-degradation}).}
\vspace{-10pt}
\label{fig:misalignment-2d}
\end{figure}

\myparagraph{Practical Implementation}
The procedure is architecture-agnostic. Fully connected fronts can directly use their first weight matrix as $W$.
For convolutional fronts, we apply the construction to the linearized operator of the first convolution (\eg \texttt{nn.unfold}) to obtain $W$; after synthesis, the perturbation is folded back to the native input layout.
For Transformer-style models (\eg BERT~\citep{devlin2019bert}), we take $W$ to be the input projection of the first multi-head self-attention block \emph{after} the embedding layer (\eg the \texttt{QKV} input projection or its concatenated form) and apply the same spectral construction.
A theoretical discussion of the equivalence and generalizability of these instantiations is provided in Appendices~\ref{appendix:convolution} and \ref{appendix:token-embedding}. The generation of the vector $z$ can be stochastic (\eg seeded per instance) or deterministic (\eg fixed codes per class or per client).
We normalize $\delta \leftarrow \delta / \max\{1, \|\delta\|_2\}$ so that $\|\delta\|_2 \le 1$, then rescale by a factor $\lambda > 0$ that sets its amplitude, giving $\|\delta\|_2 \le \lambda$.
The parameters $\tau$ and $\lambda$ govern the trade-off between authorized-utility retention and unauthorized-utility degradation: larger $\tau$ tends to increase the number of perturbed input dimensions, while larger $\lambda$ increases the perturbation magnitude.
In practice, the permissible insensitive space is deliberately loose, offering substantial flexibility in synthesis, as empirically explored in Section~\ref{sec:experiments}. 
\section{Theoretical Properties of \codename{}s}
\label{sec:theory}

We prove two output-norm bounds at the first-layer operator $W$: retention for the authorized model (Section~\ref{sec:target-retention}) and degradation for any unauthorized model (Section~\ref{sec:cross-model-degradation}). A network-wide propagation result (Proposition~\ref{prop:network-wide}) then lifts both through the full network via layer-wise composition, instantiating Eq.~\eqref{eq:retention} and supplying the norm-level separation underlying Eq.~\eqref{eq:degradation}. We defer full proofs to Appendix~\ref{appendix:deferred-proofs}, where the cross-model separation is derived from a quantifiable misalignment condition with explicit, finite-sample constants, then extended to the full network.

\myparagraph{Standing Assumptions and Notation}
Let $W = \Wauth$ (Definition~\ref{def:first-layer}) have SVD $U S V^\top$ with $s_1 \le \cdots \le s_n$; partition $V = [\Vsmall, V^{(l)}]$ at $\tau$, so $\Vsmall \in \mathbb{R}^{n \times k}$ spans $\Inst{W}$. The recoding of Section~\ref{sec:method:ne_construct} sets $z \sim \mathcal{N}(0, \sigma^2 I_k)$, $\delta = \Vsmall z$, $\tilde{x} = x + \delta$. For $\fauth = g_L \circ \cdots \circ g_1$ and unauthorized $\funa = g'_L \circ \cdots \circ g'_1$, each post-first $g_i$ is $L_i^\star$-Lipschitz and $g'_i$ is $\nu_i^{\prime\star}$-co-Lipschitz on the trajectory pair $(x,\tilde{x})$ (local Jacobian norms verified in Appendix~\ref{appendix:layerwise}); write $\Lipprod := \prod_{i=2}^{L} L_i^\star$ and $\nuprod := \prod_{i=2}^{L} \nu_i^{\prime\star}$. Assumption~\ref{assu:spectral-flatness} is invoked where stated.

\subsection{Authorized-Utility Retention}
\label{sec:target-retention}

We first bound the first-layer deviation induced by the recoding $\tilde{x} = x + \delta$.

\begin{theorem}[Bounding Authorized Utility]
\label{thm:target-retention}
Under the standing assumptions, for any $t > 0$, with probability at least $1 - 2 k \sigma^4 / t^2$,
\begin{align}
    \|W\tilde{x} - W x\|_2 < \tau \sqrt{k \sigma^2 + t}.
\label{eq:target-retention-bound}
\end{align}
\end{theorem}
\begin{proof}
Orthonormality of $V$ gives $W\delta = U^{(s)} S^{(s)} z$, with $S^{(s)} = \mathrm{diag}(s_1, \ldots, s_k)$ and $U^{(s)}$ collecting the first $k$ columns of $U$. Since $U^{(s)}$ has orthonormal columns and $s_q \leq \tau$ for $q \leq k$, $\|W\delta\|_2 = \|S^{(s)} z\|_2 \leq \tau \|z\|_2$. A Chebyshev tail bound on $\|z\|_2^2$ (mean $k\sigma^2$, variance $2k\sigma^4$) yields $\|z\|_2 < \sqrt{k\sigma^2 + t}$ at the stated probability, giving Eq.~\eqref{eq:target-retention-bound}; the Chebyshev computation that supplies the probability constant is in Appendix~\ref{appendix:proof-target-retention}.
\end{proof}

The threshold $\tau$ and noise scale $\sigma$ act as direct knobs on the first-layer deviation. Spectral flatness (Assumption~\ref{assu:spectral-flatness}) makes $\Inst{W}$ high-dimensional, so the bound is non-vacuous in practice; Figure~\ref{fig:svd} and Appendix~\ref{appendix:spectral-flatness} verify this empirically across the backbones we examine. Proposition~\ref{prop:network-wide} below lifts the bound to the full network and to expected utility.

\begin{figure*}[t]
    \centering
    \includegraphics[width=\linewidth]{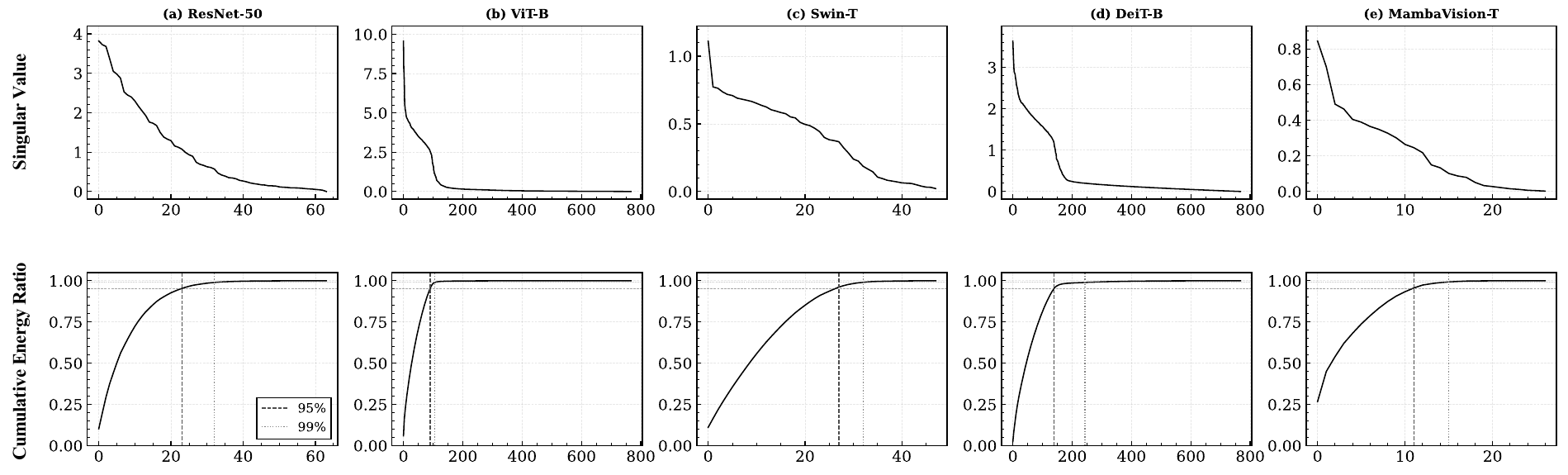}
    \caption{First-layer singular values (top) and cumulative energy share (bottom) across five models.}
    \label{fig:svd}
\end{figure*}

\subsection{Unauthorized-Utility Degradation}
\label{sec:cross-model-degradation}

The cross-model bound follows from a measurable misalignment between the authorized and unauthorized spectral bases. We assume the \emph{misalignment condition} $\|(V'^{(l)})^\top \Vsmall\|_2 \geq \alpha$ for some $\alpha > 0$, verified in Section~\ref{sec:experiments} (Table~\ref{tab:fl_cross_matrix_one}) across all backbones. Figure~\ref{fig:misalignment-2d} (Section~\ref{sec:method}) gives the corresponding 2D picture, where $\alpha$ reduces to $\sin\theta$ for $\theta = \angle(v_2, v'_2)$ and generalizes it to higher dimensions.

\begin{theorem}[Bounding Unauthorized Utility]
\label{thm:cross-model-degradation}
Let $W' \in \mathbb{R}^{m \times n}$ be the first-layer operator of an unauthorized model with SVD $W' = U' S' V'^\top$ and singular values $s'_1 \le \cdots \le s'_n$. Partition $V' = [V'^{(s)}, V'^{(l)}]$ at index $k$, with $V'^{(l)} \in \mathbb{R}^{n \times (n-k)}$, and set $s'_{\min} := s'_{k+1}$. Under the standing assumptions and the misalignment condition, for any $t > 0$ and $c \in (0, \sqrt{\pi/2}]$, with probability at least $1 - 2k\sigma^4/t^2 - c\sqrt{2/\pi}$,
\begin{gather}
    \|W' \tilde{x} - W' x\|_2 - \|W \tilde{x} - W x\|_2 \;\geq\; c \alpha \, s'_{\min} \sigma - \tau \sqrt{k\sigma^2 + t}.
    \label{eq:cross-model-bound}
\end{gather}
\end{theorem}
\begin{proof}
Set $A := (V'^{(l)})^\top \Vsmall$. Orthogonality of $U'$ and the lower bound $s'_{k+i} \geq s'_{\min}$ give $\|W'\delta\|_2 \geq s'_{\min} \|A z\|_2$. Letting $p_1$ be the top right singular vector of $A$, $\|A z\|_2 \geq \|A\|_2\,|p_1^\top z|$ with $p_1^\top z \sim \mathcal{N}(0, \sigma^2)$; combining $\|A\|_2 \geq \alpha$ with Gaussian anti-concentration gives $\|A z\|_2 \geq c\alpha\sigma$ with probability at least $1 - c\sqrt{2/\pi}$. A union bound with Theorem~\ref{thm:target-retention} yields Eq.~\eqref{eq:cross-model-bound}; Gaussian anti-concentration on $p_1^\top z$ and joint-failure accounting against retention appear in Appendix~\ref{appendix:proof-cross-model-degradation}.
\end{proof}

The same recoding $\delta$ is suppressed by $W$'s small singular values and amplified by $W'$'s large ones whenever the two bases disagree on $\Inst{W}$. The size of the gap is governed by the misalignment $\alpha$ and the smallest sensitive gain $s'_{\min}$ of $W'$, both measured across architectures.

\myparagraph{Network-Wide Propagation}
We lift the first-layer bounds to the full network via layer-wise composition, then derive the expected-utility form of Eqs.~\eqref{eq:retention}, \eqref{eq:degradation}.

\begin{proposition}[Network-wide propagation]
\label{prop:network-wide}
Under the standing assumptions and the misalignment condition, with probability at least $1 - 2k\sigma^4/t^2 - c\sqrt{2/\pi}$ (for $t > 0$, $c \in (0, \sqrt{\pi/2}\,]$): (a) Lipschitz composition lifts Theorem~\ref{thm:target-retention} to a network-wide authorized-retention bound scaled by $\Lipprod$, and (b) co-Lipschitz composition lifts Theorem~\ref{thm:cross-model-degradation} to a network-wide cross-model separation scaled by $\nuprod$. Explicit forms (Eqs.~\eqref{eq:network-auth-bound} and~\eqref{eq:network-unauth-bound}) are deferred to Appendix~\ref{appendix:deferred-proofs}.
\end{proposition}
\begin{proof}
Part (a) follows by Lipschitz composition through $g_2, \ldots, g_L$ from Theorem~\ref{thm:target-retention}; part (b) by co-Lipschitz composition through $g'_2, \ldots, g'_L$ applied to Theorem~\ref{thm:cross-model-degradation}, joined by a union bound on the same draw $z$. Appendices~\ref{appendix:proof-network-auth},~\ref{appendix:proof-network-unauth} give the layer-wise induction and joint-failure union bound.
\end{proof}

The constants $L_i^\star$ and $\nu_i^{\prime\star}$ are \emph{local} Jacobian norms on the trajectory pair $(x, \tilde{x})$, not global Lipschitz bounds; Appendix~\ref{appendix:layerwise} (Table~\ref{tab:layerwise_resnet}) confirms they stay $O(1)$ per layer, so $\Lipprod$ and $\nuprod$ avoid the usual product explosion. Comparing the two parts of Proposition~\ref{prop:network-wide} yields the design rule $\nuprod\,c\alpha\,s'_{\min}\,\sigma > \Lipprod\,\tau\sqrt{k\sigma^2+t}$.
Connecting back to Eqs.~\eqref{eq:retention} and~\eqref{eq:degradation}, we specialize $\mu$ to top-1 accuracy: let $d(\fauth, x) := [\fauth(x)]_{\hat{y}} - \max_{y'\neq\hat{y}}[\fauth(x)]_{y'}$ be the logit margin ($\hat{y} = \arg\max_y [\fauth(x)]_y$). If the clean-data margin satisfies $\Pr_{x \sim P_\mathcal{D}}[d(\fauth, x) > \sqrt{2}\,\Lipprod\,\tau\sqrt{k\sigma^2+t}] \geq 1 - \eta$, then Proposition~\ref{prop:network-wide}(a) and a Cauchy--Schwarz step ($\|e_{\hat y} - e_{y'}\|_2 = \sqrt{2}$) together preserve the top-1 prediction under recoding; the failure-event mass then gives $\expmet{\fauth}{\mathcal{X}_\mathcal{D}} - \expmet{\fauth}{\Xrec} \leq \eta + 2k\sigma^4/t^2$, instantiating Eq.~\eqref{eq:retention} with $\rho = \eta + 2k\sigma^4/t^2$. Proposition~\ref{prop:network-wide}(b) yields the analogous norm-level separation underlying Eq.~\eqref{eq:degradation}, realized as a prediction-level gap across architectures in Table~\ref{tab:fl_cross_matrix_one}.

\section{Experiments}
\label{sec:experiments}

\codename{}s preserve authorized utility while rendering unauthorized models unusable. Section~\ref{subsec:cross} establishes cross-model non-transferability against \textit{GA} and \textit{TA}; Section~\ref{subsec:comparison} positions \codename{}s against three authorization-related mechanisms; Section~\ref{subsec:practicality} stress-tests across modalities and against the adaptive adversary \textit{AA} via preprocessing, reconstruction, and learned adaptation.

\begin{wrapfigure}[21]{t}{0.55\textwidth}
    \centering
    \vspace{-\baselineskip}
    \includegraphics[width=\linewidth]{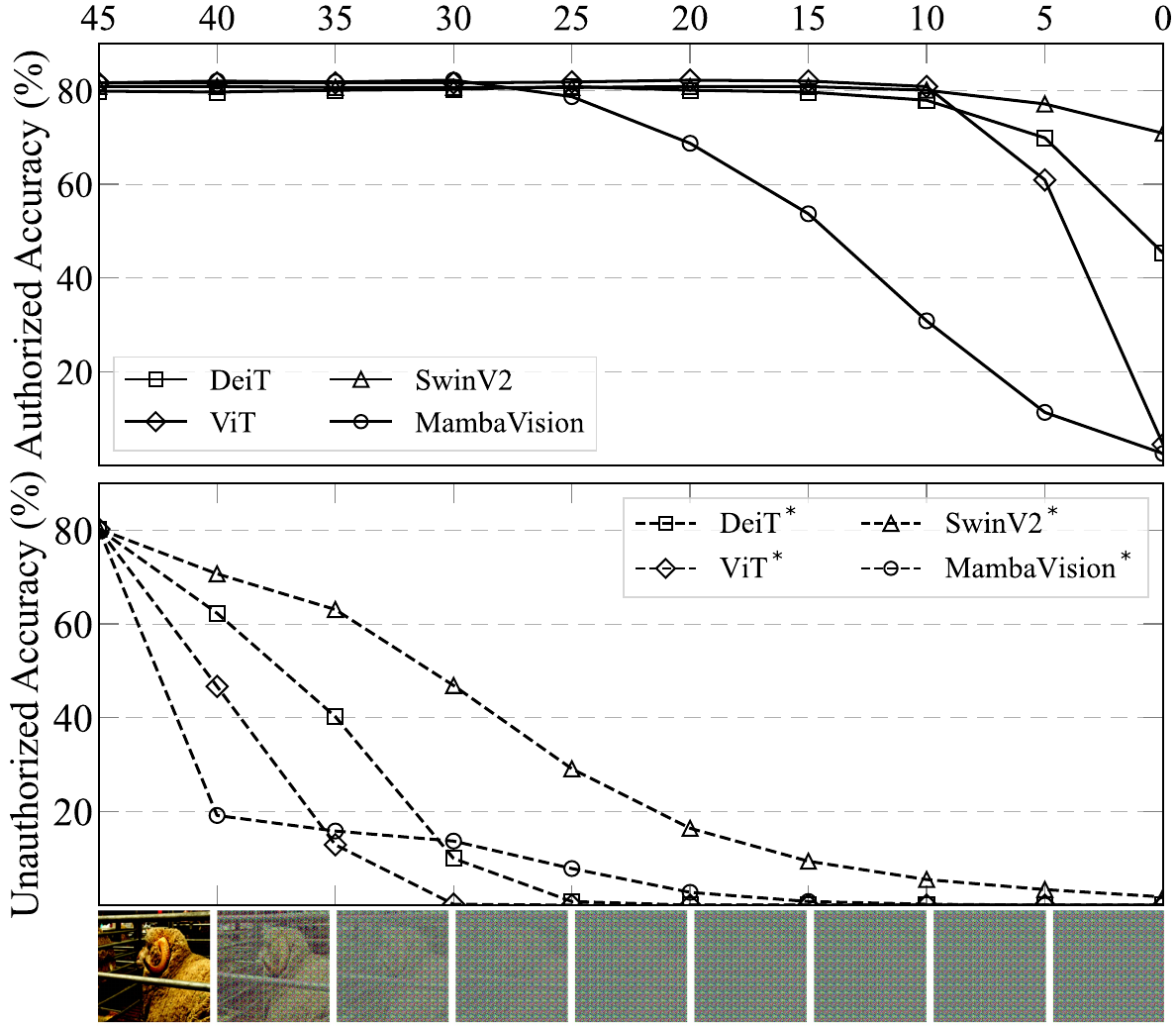}
    \caption{Authorized vs.\ unauthorized accuracy on target-recoded inputs across perturbation strength.}
    \label{fig:ne_curves}
\end{wrapfigure}

\myparagraph{Setup}
We evaluate five ImageNet pretrained backbones spanning classic and recent designs: \textit{ResNet-50}~\citep{he2016deep}, \textit{ViT-base-patch16-224}~\citep{dosovitskiy2020image}, \textit{SwinV2-tiny-patch4-window8-256}~\citep{liu2021swin}, \textit{DeiT-base-patch16-224}~\citep{deit}, and \textit{MambaVision-T-1K}~\citep{mambavision}. Experiments use CIFAR-10 and ImageNet-1K; CIFAR-10 results follow 10 epochs of fine-tuning from the ImageNet-1K checkpoints. ``Baseline'' rows in the tables denote clean accuracy under our evaluation pipeline. We also evaluate the generative vision-language models (autoregressive text decoders) \textit{Qwen2.5-VL-3B-Instruct}~\citep{qwenvl25} and \textit{InternVL3-1B}~\citep{InternVL3} on MMBench~\citep{mmbench}, which spans perception and reasoning abilities (coarse and fine-grained perception; attribute, relation, and logic reasoning).
We parameterize perturbation strength by PSNR~\citep{hore2010image}; lower PSNR is stronger perturbation. Across our backbones, unauthorized accuracy collapses around 20--25\,{dB} while authorized accuracy is essentially unaffected (Figure~\ref{fig:ne_curves}). We therefore standardize on a conservative 20\,{dB} setting throughout. Recoding samples $z$ with i.i.d.\ Gaussian entries and projects onto the $\tau$-insensitive directions with $\tau = 10^{-4}$.

\subsection{Cross-Model Non-Transferability}\label{subsec:cross}

For each authorized target $\fauth$, we report authorized accuracy on $\tilde{x}=\mathcal{T}(x)$ and unauthorized accuracies on the same $\tilde{x}$. The sweep in Figure~\ref{fig:ne_curves} uses ImageNet, with ResNet-50 fine-tuned on CIFAR-10 as the unauthorized comparator.
Table~\ref{tab:fl_cross_matrix_one} reports a $5\times5$ cross-architecture matrix on CIFAR-10 and ImageNet. At 20\,{dB} PSNR, \codename{}s keep authorized performance close to clean (\eg 98.8\% to 98.7\% on CIFAR-10; 80.3\% to 80.2\% on ImageNet), while unauthorized models collapse to chance-level utility, with off-diagonals at 5.5--20.6\% on CIFAR-10 and 0.0--12.9\% on ImageNet. The one outlier is SwinV2 on ImageNet (80.9\%$\to$71.7\%); the larger authorized drop arises from its patch-merging pipeline and is recovered by mild basis tuning. Holding the architecture fixed but changing the weights yields the same picture: unauthorized accuracy stays near chance level on both datasets for every backbone, weight-level specificity holds (full matrix in Appendix~\ref{appendix:weight-specificity}, Table~\ref{tab:fl_cross_matrix_cross_only}).

\begin{table*}[!ht]
\centering
\caption{Cross-model accuracy at 20\,{dB} PSNR. Rows: authorized target $\fauth$; columns: evaluated model. \cgreen{Diagonals} are authorized; off-diagonals unauthorized.}
\label{tab:fl_cross_matrix_one}

\setlength{\tabcolsep}{2pt}
\renewcommand{\arraystretch}{0.9}

\resizebox{\textwidth}{!}{%
\begin{tabular}{lcccccccccc}
\toprule
&
\multicolumn{5}{c}{{CIFAR10}} &
\multicolumn{5}{c}{{ImageNet}} \\
\cmidrule(lr){2-6}
\cmidrule(lr){7-11}

&
ResNet-50 & ViT-B & SwinV2-T & DeiT-B & MambaVision-T &
ResNet-50 & ViT-B & SwinV2-T & DeiT-B & MambaVision-T \\
\midrule

\rowcolor{gray!12}
Baseline &
98.2\% & 98.8\% & 96.1\% & 96.1\% & 96.9\% &
80.3\% & 81.6\% & 80.9\% & 79.9\% & 82.4\% \\

ResNet-50 &
\cgreen{\textbf{97.7}\%} &  12.5\% &  13.6\% & ~~9.4\% &  10.0\% &
\cgreen{\textbf{80.2}\%} & ~~0.0\% & ~~0.1\% & ~~0.1\% & ~~0.1\% \\
ViT-B    &
 10.5\% & \cgreen{\textbf{98.7}\%} & ~~9.3\% &  12.1\%  & ~~9.7\% &
~~0.0\% & \cgreen{\textbf{81.3}\%} & ~~0.0\% &  ~~0.0\% & ~~0.0\% \\
SwinV2-T   &
 15.6\% &  11.7\% & \cgreen{\textbf{88.4}\%} &  20.6\% & 18.8\% &
~~9.1\% & ~~4.3\% & \cgreen{\textbf{71.7}\%} & ~~7.4\% & 12.9\% \\
DeiT-B   &
~~9.4\% & ~~9.8\% & ~~9.8\% & \cgreen{\textbf{96.1}\%} & ~~5.5\% &
~~0.0\% & ~~0.0\% & ~~0.0\% & \cgreen{\textbf{79.3}\%} & ~~1.0\% \\
MambaVision-T  &
 17.6\% & ~~7.0\% & 13.7\% & 11.3\% & \cgreen{\textbf{94.5}\%} &
~~5.8\% & ~~0.0\% & ~~1.5\% & ~~0.7\% & \cgreen{\textbf{81.0}\%} \\
\bottomrule
\end{tabular}
}
\end{table*}

\subsection{Positioning Among Authorization-Related Methods}\label{subsec:comparison}

\codename{}s provide inference-time model-specific authorization. We position them against three mechanisms with different threat models, not like-for-like baselines: Differential Privacy (DP)~\citep{dwork2006differential} bounds training-time leakage; Fully Homomorphic Encryption (FHE) provides confidentiality via encrypted inference; \textsc{AlgoSpec}~\citep{algospec}, the closest comparator, gates unauthorized inference via low-degree polynomial surrogates. Table~\ref{tab:accuracy_loss} reports ResNet-50 and ViT-B on CIFAR-10 and ImageNet.

\begin{table*}[!ht]
\centering
\caption{Positioning against authorization-related methods.}
\label{tab:accuracy_loss}

\setlength{\tabcolsep}{6pt}
\renewcommand{\arraystretch}{0.8}

\resizebox{\textwidth}{!}{
\begin{threeparttable}
\begin{tabular}{l ccccc ccccc}
\toprule
&
\multicolumn{5}{c}{CIFAR-10} &
\multicolumn{5}{c}{ImageNet} \\
\cmidrule(lr){2-6}
\cmidrule(lr){7-11}

&
Plain & DP & FHE\tnote{3} & \textsc{AlgoSpec} & \codename{} (Ours) &
Plain & DP & FHE\tnote{3} & \textsc{AlgoSpec} & \codename{} (Ours) \\
\midrule
ResNet-50  &
98.2\% & 59.8\%\tnote{1} & 87.8\%\tnote{3} & ~~6.4\% & 97.7\% &
80.3\% & 63.1\%\tnote{1} & ~-- & ~~0.1\% & 80.2\% \\
ViT-B   &
98.8\% & ~--\tnote{1} & ~-- & 10.0\% & 98.7\% &
81.6\% & ~--\tnote{1} & ~-- & ~~0.0\% & 81.3\% \\
\midrule
Protection &
\xmark & \xmark\tnote{2} & ~~\checkmark & \xmark\tnote{2} & \checkmark &
\xmark & \xmark\tnote{2} & ~~\checkmark & \xmark\tnote{2} & \checkmark \\
\bottomrule
\end{tabular}
\begin{tablenotes}
\footnotesize
    \item[1] DP struggles with batch norm and does not support multi-head attention in Transformers.
    \item[2] The authorized model performance is significantly impacted.
    \item[3] FHE inference at our scale is computationally prohibitive; reported figures are from the cited public benchmark.
\end{tablenotes}
\end{threeparttable}
}
\end{table*}

For DP training, we follow \citet{li2024you} using IBM \texttt{DiffPrivLib}~\citep{diffprivlib}. Authorized accuracy drops sharply, by more than 30\% on CIFAR-10 and around 20\% on ImageNet for ResNet-50, due to interactions with batch normalization. The pipeline also does not support transformer variants, limiting applicability. For FHE, we adopt a CKKS encrypted-inference configuration following TenSEAL~\citep{tenseal2021}; running it at our scale is computationally infeasible, with single-image latency exceeding 30 minutes. We therefore cite published ResNet-20 CIFAR-10 results~\citep{fheResnet}, which preserve clean authorized accuracy but incur prohibitive cost. For \textsc{AlgoSpec}, polynomial approximation of modern deep networks accumulates error with depth and width, driving authorized accuracy toward chance on both datasets. \codename{}s, by contrast, are a lightweight input-side recoding tied to the target model: under matched conditions they preserve authorized accuracy and drive mean unauthorized accuracy to chance on both datasets, with negligible overhead.

\subsection{Practicality of \codename{}s}\label{subsec:practicality}

\myparagraph{Generative Vision-Language Models} We extend the construction to generative vision-language models, with \texttt{InternVL3} as authorized and \texttt{Qwen2.5-VL} as unauthorized, evaluated on MMBench~\citep{mmbench} across capability dimensions AR, CP, FP-C, FP-S, LR, and RR. Table~\ref{tab:vlm} shows that authorized performance is unchanged across all dimensions while unauthorized accuracy collapses (overall 78.8\% to 18.3\%). Figures~\ref{fig:vlm}, \ref{fig:vlm_safe}, and~\ref{fig:vlm_standard} show that the unauthorized model perceives \codename{}s as random noise.

\begin{table}[!ht]
\centering
\caption{Model-specific Authorization on MMBench for VLMs.}
\label{tab:vlm}

\setlength{\tabcolsep}{3pt}
\renewcommand{\arraystretch}{0.8}

\resizebox{\textwidth}{!}{
\begin{tabular}{l ccccccc ccccccc}
\toprule
&
\multicolumn{7}{c}{InternVL3-1B \cgreen{(authorized)}} &
\multicolumn{7}{c}{Qwen2.5-VL-3B \cred{(unauthorized)}} \\
\cmidrule(lr){2-8}
\cmidrule(lr){9-15}

&
Overall & AR & CP & FP-C & FP-S & LR & RR &
Overall & AR & CP & FP-C & FP-S & LR & RR \\
\midrule

\rowcolor{gray!12}
Baseline &
72.7\% & 77.4\% & 82.4\% & 58.7\% & 79.4\% & 50.3\% & 66.8\% &
78.8\% & 81.3\% & 83.1\% & 69.2\% & 84.9\% & 66.5\% & 75.4\% \\
\codename{} (Ours) &
\cgreen{72.6\%} & \cgreen{77.8\%} & \cgreen{82.0\%} & \cgreen{58.3\%} & \cgreen{78.9\%} & \cgreen{50.9\%} & \cgreen{67.3\%} &
\cred{18.3\%} & \cred{29.2\%} & \cred{15.6\%} & \cred{17.0\%} & \cred{12.8\%} & \cred{17.3\%} & \cred{22.3\%} \\
\bottomrule

\end{tabular}
}
\end{table}

\myparagraph{Skin-lesion Classification} The construction depends only on the first-layer representation and so transfers naturally to high-stakes, privacy-sensitive domains. On HAM10000~\citep{ham_dataset} dermoscopic skin lesions (HuggingFace \texttt{Nagabu/HAM10000}), a fine-tuned ViT reaches 99.2\% on clean and on \codename{}-encoded inputs alike, while a fine-tuned unauthorized ResNet-50 collapses to 0.0\% on the same encoded inputs (visual examples in Appendix~\ref{appendix:skin-lesion}, Figure~\ref{fig:ne_medical}).\footnote{The test set is heavily imbalanced with more than 80\% positives.}

\myparagraph{Robustness to Preprocessing} Common distortions leave the gap intact: on ViT-B/CIFAR-10, authorized \codename{}-accuracy stays within 1--3\,pp of clean under random 10\%--40\% crops and at 68.4\% under JPEG re-encoding to quality 75, with unauthorized models at random-guess in both settings; the same pattern holds across the VLM pipeline (Table~\ref{tab:vlm}, Appendix~\ref{subsec:vlm_preproc}).

\myparagraph{Adaptive Attacks} We evaluate two attackers against \codename{}s. A learned SR-ResNet~\citep{ledig2017photo} reconstruction attacker trained in black-box (Noise2Noise) and white-box (Noise2Clean) regimes recovers at most $0.7\,\text{dB}$ SNR over the recoded input and fails to restore unauthorized accuracy, while authorized accuracy is essentially unchanged (Appendix~\ref{subsec:reconstruction}). The stronger adversary \textit{AA} tries to \emph{learn around} \codename{}s by retraining an unauthorized model on encoded inputs. We grant \textit{AA} maximally favorable capabilities: clean labeled pairs $(x,y)$ from the same task distribution, and an arbitrarily large \emph{matched} encoded corpus produced by replicating the \codename{} pipeline and pairing each $\tilde{x}$ with the original label. This removes the usual obstacles to adaptation, namely distribution shift, label scarcity, and limited encoded data; the attacker trains directly on the inference-time format under full supervision.

\begin{wraptable}{r}{0.62\textwidth}
\centering
\caption{Learned adaptation attack against \codename{}s.}
\label{tab:nte_results}
\setlength{\tabcolsep}{4pt}
\renewcommand{\arraystretch}{0.95}
\resizebox{\linewidth}{!}{%
\begin{tabular}{lccccc}
\toprule
Attacker Model & ResNet-50 & ViT-B & SwinV2-T & DeiT-B & MambaVision-T \\
\midrule
Clean                & 98.2\% & 98.8\% & 96.1\% & 96.1\% & 96.9\% \\
Attacker             & 11.1\% & 10.9\% & 10.5\% & 13.8\% & 10.9\% \\
\codename{}-Attacker & 10.3\% & 10.7\% & 10.3\% & 10.9\% & 11.2\% \\
\bottomrule
\end{tabular}%
}
\end{wraptable}

Table~\ref{tab:nte_results} summarizes the results. All five attacker backbones evaluated on \codename{}s from the authorized ViT-B collapse to random-guess (row ``Attacker''), including a same-architecture ViT-B attacker: matching the architecture is not enough without the exact authorized weights. We then permit end-to-end adaptation by applying the same \codename{} procedure to every training example and fine-tuning each backbone with full-parameter updates for 3 epochs at batch size 64 and learning rate $5\times10^{-4}$. Even under this distribution-matched, fully supervised adaptation (row ``\codename{}-Attacker''), performance on \codename{}s remains at random-guess, showing that specificity holds against our strongest adversary.

\subsection{Runtime Analysis}
\label{sec:runtime}
\begin{wraptable}[12]{r}{0.42\textwidth}
\centering
\vspace{-1.5\baselineskip}
\caption{Runtime for \codename{} construction.}
\label{tab:runtime}
\setlength{\tabcolsep}{4pt}
\renewcommand{\arraystretch}{0.95}
\begin{threeparttable}
\begin{tabular}{l@{\hspace{6pt}}r@{\hspace{6pt}}c}
\toprule
Model & SVD$^1$ & Runtime$^2$ \\
\midrule
ResNet-50     & 1.8\,ms   & \multirow{5}{*}{\fbox{0.11\,ms}} \\
ViT-B         & 0.7\,ms   &  \\
SwinV2-T      & 144.8\,ms &  \\
DeiT-B        & 148.3\,ms &  \\
MambaVision-T & 0.2\,ms   &  \\
\bottomrule
\end{tabular}
\begin{tablenotes}\footnotesize
\item[1] Computed \textbf{once} per model; cross-backbone gaps are one-shot dispatch overhead.
\item[2] For a $224\times224\times3$ image.
\end{tablenotes}
\end{threeparttable}
\end{wraptable}

\codename{} generation is lightweight enough for real-time inference and large-scale deployment. The only non-trivial step is a one-time SVD of the first-layer per-patch map $W \in \mathbb{R}^{m \times n}$ ($n{=}147$ for ResNet-50, $n{=}768$ for ViT-B/DeiT-B) at cost $O(\min\{m^2 n,\, m n^2\})$, amortized over all releases; recoding then samples $z \in \mathbb{R}^k$ with $k \ll n$, computes $\delta = \Vsmall z$ at cost $O(nk)$, and sets $\tilde{x}=x+\delta$. On a single RTX A6000 (FP32, batch 1, host--device transfer included), encoding is sub-millisecond ($\approx 0.11$\,ms per $224\times224\times3$ image; Table~\ref{tab:runtime}), negligible against a forward pass; FLOPs and per-backbone breakdown in Appendix~\ref{appendix:runtime}.
\section{Related Work}
\label{sec:related}

\myparagraph{Watermarking and Content Provenance}
Watermarking embeds imperceptible signals to assert ownership or enable tracing~\cite{cox2002digital,hidden}, and provenance standards such as C2PA Content Credentials attach cryptographically signed, tamper-evident origin and edit metadata~\cite{c2pa2024techspec}. Yet such marks can be removed or forged under realistic adversaries~\cite{bui2025trustmark} and, even when intact, only describe content rather than gate its use: a watermarked input remains consumable by any model.

\myparagraph{Learning-Time Defenses}
A complementary family intervenes during training: unlearnable~\cite{liu2024stable,wang2025provably} and ungeneralizable~\cite{ungeneralizable} examples add bilevel-crafted perturbations that block standard learners from generalizing while sparing a designated learner, and non-transferable training alters objectives or parameterizations to suppress cross-model or cross-domain transfer~\cite{wang2022nontransferable,ijcai2025p1161}. All presume training-pipeline influence and offer no leverage once trained models consume released data at inference.

\myparagraph{Confidentiality Methods}
Differential privacy bounds any record's influence on the trained model~\cite{dwork2014algorithmic}, while fully homomorphic encryption keeps inputs encrypted under computation~\cite{gentry2009fully,folkerts2021redsec,lee2022privacy,kim2024privacy} at latency and memory costs precluding routine media or MLaaS workloads~\cite{ribeiro2015mlaas,fheResnet}. Both gate access by training participation or key possession, not by the inference model.

\section{Discussion and Conclusion}
\label{sec:discussion}

We introduce \codename{}s, a training-free input recoding that certifies authorized retention and cross-model degradation at negligible cost. Across five vision backbones and two VLMs, the separation holds in both directions and survives preprocessing, reconstruction, and adaptation attacks. Acting after release rather than on training pipelines, \codename{}s give data owners a deployable tool for purpose limitation under emerging regulatory and frontier-safety frameworks. Two extensions follow: a human-imperceptible variant that keeps $\tilde{x}$ visually faithful while still recoding in the insensitive subspace, and multi-client deployment with per-client and shared encodings (Appendix~\ref{appendix:multi-client}). Both inherit the same model-specific separation: full utility for $\fauth$, unusable to any other model, preserved after release and under adaptive attack.

\bibliography{ref}

\appendix
\section{Supplementary Preliminaries}
\label{appendix:preliminaries}

\subsection{Eigen-decomposition and PCA}
\label{appendix:pca}

Principal component analysis (PCA) projects data onto directions of maximal variance, obtained by eigendecomposing the data covariance.
We first recall the eigendecomposition of a real square matrix.
\begin{definition}[Eigendecomposition]
\label{def:eigendecomposition}
    The eigendecomposition of a square matrix $C \in \mathbb{R}^{n \times n}$ is a factorization of the form $C = V \Lambda V^\top$, where $V \in \mathbb{R}^{n \times n}$ is an orthogonal matrix whose columns are the eigenvectors of $C$, and $\Lambda \in \mathbb{R}^{n \times n}$ is a diagonal matrix whose diagonal entries are the eigenvalues of $C$.
    The eigenvalues are the scalars $\lambda_i$ such that $C v_i = \lambda_i v_i$, where $v_i$ is the $i$-th eigenvector of $C$.
\end{definition}
By default, eigenvalues are arranged in ascending order along the diagonal of $\Lambda$.
Geometrically, $V^\top$ rotates the coordinate system, $\Lambda$ scales the new axes by the eigenvalues, and $V$ rotates back.
For a centered dataset $x \in \mathbb{R}^n$ with input dimension $n$ (\ie $\mathbb{E}[x] = 0$, achieved by subtracting the sample mean):
\begin{definition}[Principal Component Analysis (PCA)]
\label{def:pca}
    The covariance matrix is $C = \mathbb{E}[x x^\top] \in \mathbb{R}^{n \times n}$, where $\mathbb{E}$ denotes the expectation operator.
    The eigendecomposition of the covariance matrix is given by $C = V_{pca} \Lambda V_{pca}^\top$, where $V_{pca}$ is the matrix of eigenvectors (column vectors) and $\Lambda$ is the diagonal matrix of eigenvalues.
    The principal components are the columns of the matrix $V_{pca}$, and the projection of the data onto the principal components is given by $V_{pca}^\top x$.
\end{definition}
PCA is commonly used for dimensionality reduction via the top $k$ components, but the bottom of the spectrum is equally informative: low-eigenvalue directions mark the lowest-variance input dimensions.

In the context of neural networks, PCA further exposes which input dimensions co-vary in learned representations, \eg how few dimensions suffice for a given classification task.

\subsection{Singular Value Decomposition}
\label{appendix:svd}

We use the singular value decomposition (SVD) to analyze the spectral structure of weight matrices, following Section~\ref{sec:neural-networks} and omitting biases for simplicity.

\begin{lemma}[Singular Value Decomposition (SVD)]
\label{lemma:svd}
For any matrix $W \in \mathbb{R}^{m \times n}$, there exist orthogonal matrices
$U \in \mathbb{R}^{m \times m}$ and $V \in \mathbb{R}^{n \times n}$, and a diagonal matrix
$S \in \mathbb{R}^{m \times n}$ such that
\[
    W = U S V^\top.
\]
The diagonal entries of $S$ are the singular values of $W$, and the columns of $U$ and $V$ are the left and right singular vectors, respectively.
\end{lemma}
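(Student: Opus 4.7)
The plan is to construct the SVD by reducing to the eigendecomposition (Definition~\ref{def:eigendecomposition}) applied to the Gram matrix $W^\top W$. First, I will observe that $W^\top W \in \mathbb{R}^{n \times n}$ is symmetric, since $(W^\top W)^\top = W^\top W$, and positive semidefinite, since $x^\top W^\top W x = \|Wx\|_2^2 \geq 0$ for every $x \in \mathbb{R}^n$. Invoking Definition~\ref{def:eigendecomposition} yields an orthogonal matrix $V \in \mathbb{R}^{n \times n}$ with columns $v_1, \ldots, v_n$ and a diagonal matrix $\Lambda$ whose entries $\lambda_i \geq 0$ satisfy $W^\top W = V \Lambda V^\top$. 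I will define the singular values as $\sigma_i = \sqrt{\lambda_i}$, which are well-defined and nonnegative.

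Next, I will build the left singular vectors. Let $r$ denote the number of indices with $\sigma_i > 0$. For those indices I set $u_i = Wv_i/\sigma_i \in \mathbb{R}^m$ and verify orthonormality through
\begin{gather*}
u_i^\top u_j = \frac{v_i^\top W^\top W v_j}{\sigma_i \sigma_j} = \frac{\lambda_j\, v_i^\top v_j}{\sigma_i \sigma_j} = \delta_{ij},
\end{gather*}
using $W^\top W v_j = \lambda_j v_j$ and orthonormality of the columns of $V$. Because $\{u_i\}_{i \leq r}$ is an orthonormal family of at most $r \leq m$ vectors, I can extend it by Gram-Schmidt to a full orthonormal basis $u_1, \ldots, u_m$ of $\mathbb{R}^m$, yielding an orthogonal matrix $U \in \mathbb{R}^{m \times m}$.

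Finally, I will define $\Sigma \in \mathbb{R}^{m \times n}$ as the rectangular diagonal matrix whose $(i,i)$ entry is $\sigma_i$ for $1 \leq i \leq \min(m,n)$ and zero elsewhere, and verify the factorization column-by-column. For each index with $\sigma_j > 0$, $Wv_j = \sigma_j u_j$ by construction of $u_j$; for each index with $\sigma_j = 0$, $\|Wv_j\|_2^2 = v_j^\top W^\top W v_j = \lambda_j = 0$, so $Wv_j = 0$. In both cases $Wv_j$ equals the $j$-th column of $U\Sigma$, hence $WV = U\Sigma$, and right-multiplying by $V^\top$ (using $VV^\top = I$) yields $W = U\Sigma V^\top$ as desired. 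The singular vector interpretation is immediate: the columns of $V$ are eigenvectors of $W^\top W$ and the columns of $U$ are eigenvectors of $WW^\top$, with $\sigma_i^2$ as shared eigenvalues.

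The main obstacle is the rank-deficient and rectangular situation: the formula $u_i = Wv_i/\sigma_i$ is undefined whenever $\sigma_i = 0$, and when $m$ exceeds the rank of $W$ one must still supply $m-r$ additional left singular vectors that are not tied to any right singular vector. Both issues are resolved uniformly by the Gram-Schmidt extension: the completion vectors enter $U$ only at column positions where the corresponding column of $\Sigma$ is zero, so they do not disturb the product $U\Sigma V^\top$. Any prescribed ordering of the $\sigma_i$ (for instance the ascending convention used in Definition~\ref{def:insensitive-subspace}) can then be imposed by consistently permuting the columns of $V$, the columns of $U$ corresponding to nonzero singular values, and the diagonal of $\Sigma$.
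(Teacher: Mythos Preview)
The paper explicitly states this lemma \emph{without proof}, writing that it ``only provide[s] this typical result without proof for simplicity.'' There is therefore no authors' argument to compare against. Your proposal is the standard and correct construction: diagonalize the symmetric PSD Gram matrix $W^\top W$ via Definition~\ref{def:eigendecomposition} to obtain $V$ and the $\sigma_i=\sqrt{\lambda_i}$, set $u_i=Wv_i/\sigma_i$ on the nonzero part, complete to an orthogonal $U$ by Gram--Schmidt, and verify $WV=U\Sigma$ columnwise. The only point worth tightening is the implicit reindexing so that the $r$ nonzero singular values occupy positions $1,\ldots,r\le\min(m,n)$ before you place them on the diagonal of the rectangular $\Sigma$; you acknowledge this in your final paragraph, and it causes no difficulty.
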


Singular values are non-negative, and we index them in ascending order along $S$ throughout this appendix.
Larger singular values mark directions $W$ amplifies, while small ones mark low-gain directions; the latter underpin the low-sensitivity subspaces in our construction.

\myparagraph{SVD in Neural Networks}
SVD of a layer's weight matrix reveals which input directions the layer prioritizes: singular vectors give the directions, singular values their gains. For the first layer, this connects directly to PCA of the input data (up to whitening and scaling), since dominant data components tend to align with the most responsive input directions.

\subsection{Convolution}
\label{appendix:convolution}

Convolution, the core operation of CNNs, is a linear map expressible as matrix multiplication.
We treat multi-dimensional arrays as tensors and omit batch size, padding, stride, dilation, and grouping, which are recovered by zero-padding the input or kernel.
\begin{definition}[Convolution]
\label{def:convolution}
    The convolution takes inputs including an input tensor $X \in \mathbb{R}^{c_1 \times h \times w}$ and a kernel (filter) $K \in \mathbb{R}^{c_1 \times c_2 \times k_h \times k_w}$ with the kernel bias $b \in \mathbb{R}^{c_2}$, where $c_1$ and $c_2$ are the number of channels in the input tensor and kernel, respectively, and $h$, $w$, $k_h$, and $k_w$ are the height and width of the input tensor and kernel.
    The convolution operation outputs a tensor $Y \in \mathbb{R}^{c_2 \times h' \times w'}$, where $h'$ and $w'$ are the height and width of the output tensor, and each element of the output tensor is computed as follows,
    \begin{gather*}
        Y_{j, h'_i, w'_i} = \sum_{c=1}^{c_1} \sum_{u=1}^{k_h} \sum_{v=1}^{k_w} K_{c, j, u, v} \cdot X_{c, h'_i + u - 1, w'_i + v - 1} + b_{j},
    \end{gather*}
    where $j \in \{1, \ldots, c_2\}$ indexes the output channel and $(h'_i, w'_i)$ is the spatial position in the output tensor.
\end{definition}

\myparagraph{Convolution as Matrix Multiplication}
Unfolding the input and kernel into matrices reduces convolution to matrix multiplication; folding the result recovers the output tensor. In image processing these reshapes are known as \textit{im2col} and \textit{col2im}.
\begin{lemma}[Convolution as Matrix Multiplication]
\label{lem:convolution-matrix}
    The convolution operation can be represented as a matrix multiplication by unfolding the input tensor $X$ into a matrix $X' \in \mathbb{R}^{c_1 k_h k_w \times h' w'}$ and the kernel $K$ into a matrix $K' \in \mathbb{R}^{c_2 \times c_1 k_h k_w}$, where $h'$ and $w'$ are the height and width of the output tensor.
    By processing matrix multiplication $Y' = K' X' + b$, we can obtain the output tensor $Y' \in \mathbb{R}^{c_2 \times h' w'}$ and then fold it back into a tensor $Y \in \mathbb{R}^{c_2 \times h' \times w'}$.
\end{lemma}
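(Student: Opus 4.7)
The plan is to exhibit explicit unfold and fold maps and then verify entry by entry that the matrix product $K'X'+b$ reproduces the convolution formula in Definition~\ref{def:convolution}. Since no inequality, limit, or spectral fact is involved, the argument is purely combinatorial: a change of coordinates together with a tautological rewriting of a triple sum as an inner product.

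First, I would fix a bijection $\phi:\{1,\dots,c_1\}\times\{1,\dots,k_h\}\times\{1,\dots,k_w\}\to\{1,\dots,c_1 k_h k_w\}$ that linearizes the patch coordinates (for instance, row-major), and a bijection $\psi:\{1,\dots,h'\}\times\{1,\dots,w'\}\to\{1,\dots,h'w'\}$ that linearizes the output spatial coordinates. Using $\phi$ and $\psi$, the im2col map is defined by $X'_{\phi(c_{1,j},h'_j,w'_j),\,\psi(h'_i,w'_i)} \;=\; X_{c_{1,j},\,h'_i+h'_j-1,\,w'_i+w'_j-1}$, so that the column of $X'$ indexed by an output location $(h'_i,w'_i)$ is exactly the flattened receptive-field patch of size $c_1 k_h k_w$ that feeds into that location.

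Next, I would define the kernel matrix $K'$ by the \emph{same} linear patch-indexing as $X'$, namely $K'_{c_{2,i},\,\phi(c_{1,j},h'_j,w'_j)} \;=\; K_{c_{1,j},c_{2,i},h'_j,w'_j}$. With these two identifications the $(c_{2,i},\,\psi(h'_i,w'_i))$ entry of $K'X'$ is the inner product of the $c_{2,i}$-th row of $K'$ with the $\psi(h'_i,w'_i)$-th column of $X'$, which unravels to $\sum_{c_{1,j}=1}^{c_1}\sum_{h'_j=1}^{k_h}\sum_{w'_j=1}^{k_w} K_{c_{1,j},c_{2,i},h'_j,w'_j}\cdot X_{c_{1,j},\,h'_i+h'_j-1,\,w'_i+w'_j-1}$ because $\phi$ is a bijection and the inner product iterates over all preimages. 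Adding the broadcast bias $b_{c_{2,i}}$ reproduces the right-hand side of Definition~\ref{def:convolution}, establishing $Y'_{c_{2,i},\,\psi(h'_i,w'_i)} = Y_{c_{2,i},h'_i,w'_i}$.

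Finally, the col2im fold is just the inverse bijection $\psi^{-1}$ applied to the column index: setting $Y_{c_{2,i},h'_i,w'_i} := Y'_{c_{2,i},\,\psi(h'_i,w'_i)}$ recovers the tensor $Y\in\mathbb{R}^{c_2\times h'\times w'}$ of the convolution. The only real obstacle is notational bookkeeping, i.e.\ committing to a single linearization $\phi$ of the patch coordinates and using it consistently in both $X'$ and $K'$ so that the matrix-product inner product replays the convolution sum verbatim; once this indexing is fixed, the extensions to padding, stride, dilation, and grouping (omitted in Definition~\ref{def:convolution}) slot in by restricting the domain of $\phi$ and shifting the argument offsets, without altering the argument.
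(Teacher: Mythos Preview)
Your proposal is correct and follows the same im2col/col2im approach as the paper: unfold receptive-field patches into columns of $X'$, flatten kernel slices into rows of $K'$, and verify that the matrix product reproduces the convolution sum. The paper's own proof is only a sketch that describes these reshapes in words without committing to explicit index maps, whereas your version with the bijections $\phi$ and $\psi$ and the entry-by-entry verification is strictly more detailed; nothing is missing.
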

\begin{proof}
    We sketch the argument. Each local patch of $X$ matching the kernel footprint is reshaped into a column of the Toeplitz matrix $X'$; $K$ is reshaped into $K'$ by stacking its channel and spatial dimensions. Then $Y' = K' X'$ produces the output, which is folded back to $Y$. In this simple case folding reduces to a reshape; more general settings (stride, padding) require the full fold.
\end{proof}

\myparagraph{Operator scope used in Section~\ref{sec:theory}}
Section~\ref{sec:theory}'s $W$ is the per-architecture \emph{local} first-layer operator: the unfolded per-patch kernel $K' \in \mathbb{R}^{c_2 \times c_1 k_h k_w}$ of Lemma~\ref{lem:convolution-matrix} for convolutional fronts ($\mathbb{R}^{64\times 147}$ for ResNet-50's $7\!\times\!7$ stem), the post-embedding input projection of the first attention block for transformer fronts, and the input weight matrix for fully-connected fronts. Preprocessing (resize, clipping, channel normalization) is applied before recoding; $\delta$ is sampled in $W$'s standardized coordinates and lifted to the native input via the architecture-specific fold/tile/embed map.

For non-overlapping patch embeddings (ViT, DeiT, SwinV2, MambaVision; stride equals patch size), tiling is exact: a single $\delta_{\mathrm{patch}} \in \Inst{W}$ is replicated identically across all non-overlapping patches of the lifted $\delta$, so Theorem~\ref{thm:target-retention} applies per token. For overlapping convolutional stems (\eg ResNet-50's $7\!\times\!7$ stride-$2$ conv), the full image-to-feature map is the Toeplitz extension $\bar W = (I_{h'w'} \otimes K')\,U$ (with $U$ the unfold); local $\tau$-insensitivity bounds the per-patch deviation but does not automatically place every overlapping patch of the lifted $\delta$ in $\Inst{W}$. Theorems~\ref{thm:target-retention} and~\ref{thm:cross-model-degradation} therefore apply exactly to non-overlapping stems (4 of 5 evaluated backbones) and as a faithful local approximation to overlapping stems, the latter verified for all backbones and VLMs by layer-wise norms (Appendix~\ref{appendix:layerwise}) and authorized-accuracy preservation (Section~\ref{sec:experiments}).

The cross-model $\alpha$ in Theorem~\ref{thm:cross-model-degradation} is measured \emph{pairwise} on a shared local domain (or, when first-layer domains differ across architectures, after a common pixel-space lift used uniformly in Section~\ref{sec:experiments}); we do not claim a single ambient input space across architecture classes.

\subsection{Token Embedding}
\label{appendix:token-embedding}

Token embedding maps discrete tokens (words or subwords) into a continuous vector space so that neural networks can process text~\citep{devlin2019bert}; semantically similar tokens land near one another. We treat it as a linear transformation.
\begin{definition}[Token Embedding]
\label{def:token-embedding}
    The token embedding is a linear transformation that maps a discrete token $t \in \{0,1\}^d$, encoded as a one-hot vector over a vocabulary of size $d$, to a continuous vector representation $e \in \mathbb{R}^m$ via a weight matrix $W \in \mathbb{R}^{m \times d}$, where $m$ is the embedding dimension.
    The token embedding is defined as $e = W t$, so that $e$ selects the column of $W$ indexed by the active entry of $t$.
\end{definition}

\section{Supplementary Assumptions}
\label{appendix:assumptions}

We refer to the \textit{spectral distribution} of a matrix as its eigenvalues or singular values, depending on context.

\paragraph{Standing constants.}
Throughout this appendix, we use two layer-wise composition constants of the authorized and unauthorized networks, evaluated on the trajectory of interest:
\begin{align*}
    \Lipprod := \prod_{i=2}^L L_i^\star, \qquad
    \nuprod := \prod_{i=2}^L \nu_i^{\prime\star},
\end{align*}
the auth-side Lipschitz product and the unauth-side co-Lipschitz product, respectively. These quantities collect the per-layer factors used in Section~\ref{sec:theory} into a single symbol on each side.

\subsection{Spectral Flatness of the First-Layer Operator}
\label{appendix:spectral-flatness}

A spectral distribution is \textit{flat} when a non-trivial fraction of the singular values lie below a small threshold. This is common for first-layer operators trained on high-dimensional inputs: adjacent pixels in high-resolution images and contextually similar word embeddings both induce heavy low-energy tails in the learned weights.

Formally, we make the following assumption about the spectral distribution of the first-layer weight matrix.
\begin{assumption}[Spectral Flatness of the First-Layer Operator]
\label{assu:spectral-flatness}
    Given a network trained on a specific task, there exist a threshold $\tau > 0$ and an index $k$ such that the bottom $k$ singular values of the first-layer weight matrix $W$
    \begin{gather*}
        s_1 \leq s_2 \leq \cdots \leq s_k \leq \tau
    \end{gather*}
    form a non-trivial fraction of the spectrum: the spectral mass below $\tau$ is bounded below by a constant $\beta > 0$ uniformly across architectures.
\end{assumption}
We verify this tail in Figure~\ref{fig:svd} and Table~\ref{tab:app_svd_firstlayer}. The cumulative-energy curves are head-concentrated: ResNet-50 reaches 95.4\% of total spectral energy with 23 components and 99.0\% with 32, while ViT-B requires 90 and 106 components for the same levels despite a higher-dimensional patch projection. The corresponding spread is wide; minima are several orders of magnitude below maxima ($7.77\times 10^{-8}$ vs.\ $3.8203$ for ResNet-50; $3.68\times 10^{-5}$ vs.\ $9.5686$ for ViT-B), and medians sit well below means, consistent with Assumption~\ref{assu:spectral-flatness}. We do not estimate $\beta$ numerically.

\begin{table}[!ht]

\centering
\caption{Numerical experiment on first-layer singular-value.}
\label{tab:app_svd_firstlayer}

\renewcommand{\arraystretch}{1}

\resizebox{0.5\linewidth}{!}{
\begin{tabular}{lrrrr}
\toprule
& \makecell[c]{Max} & \makecell[c]{Min}~~~~ & \makecell[c]{Mean} & \makecell[c]{Median} \\
\midrule
ResNet-50 & 3.8203 & 7.77$\times$10\textsuperscript{$-$8} & 1.0242 & 0.5904 \\
ViT-Base  & 9.5686 & 3.68$\times$10\textsuperscript{$-$5} & 0.5666 & 0.0482 \\
\bottomrule
\end{tabular}
}
\end{table}

\subsection{Layer-wise Deviation under \codename{}s}
\label{appendix:layerwise}
Table~\ref{tab:layerwise_resnet} reports ResNet-50 layer-wise deviation norms at seven checkpoints (L0--L6, one every 40 forward operations counting normalizations and activations) for \codename{} strengths from 40 to 0\,dB PSNR. Deviations grow smoothly as PSNR decreases and stay bounded across depth even at 0\,dB on the authorized model, consistent with controlled near-null perturbations. The maximum per-layer deviation is 1.5062 at L5 under 0\,dB, remaining $O(1)$ with no multiplicative blow-up across depth.

\begin{table}[!ht]
    \centering
    \caption{Layer-wise deviation norms for ResNet-50 under different \codename{} strengths. L0--L6 index seven checkpoints, one every 40 forward operations (counting normalizations and activations).}
    \label{tab:layerwise_resnet}
    \resizebox{0.6\linewidth}{!}{%
    \begin{tabular}{c ccccccc}
        \toprule
        PSNR (dB) & L0      & L1      & L2      & L3      & L4      & L5      & L6      \\
        \midrule
        40        & 0.0032  & 0.0097  & 0.0014  & 0.0062  & 0.0043  & 0.0210  & 0.0008  \\
        35        & 0.0035  & 0.0107  & 0.0017  & 0.0072  & 0.0050  & 0.0247  & 0.0010  \\
        30        & 0.0071  & 0.0200  & 0.0031  & 0.0146  & 0.0102  & 0.0498  & 0.0023  \\
        25        & 0.0134  & 0.0331  & 0.0052  & 0.0272  & 0.0184  & 0.0859  & 0.0040  \\
        20        & 0.0112  & 0.0367  & 0.0068  & 0.0282  & 0.0207  & 0.1016  & 0.0046  \\
        15        & 0.0240  & 0.0680  & 0.0104  & 0.0499  & 0.0349  & 0.1672  & 0.0079  \\
        10        & 0.0513  & 0.1731  & 0.0293  & 0.1108  & 0.0849  & 0.4344  & 0.0192  \\
        5         & 0.0754  & 0.3013  & 0.0501  & 0.1506  & 0.1239  & 0.6520  & 0.0264  \\
        0         & 0.1916  & 0.5643  & 0.0994  & 0.4031  & 0.2991  & 1.5062  & 0.0674  \\
        \bottomrule
    \end{tabular}
    }
\end{table}

\section{Supplementary Clarifications, Experiments, and Results}
\label{appendix:experiments}

All experiments use Python 3.12.3, PyTorch 2.3.0, and Transformers 4.44.2 (CUDA 12.3) on a workstation with an AMD Ryzen Threadripper PRO 5965WX (24 cores), 256\,GB RAM, and two NVIDIA RTX A6000 GPUs.

\subsection{Weight-level Specificity Matrix}
\label{appendix:weight-specificity}
Table~\ref{tab:fl_cross_matrix_cross_only} reports a matrix in which every evaluated model uses weights different from the authorized model: diagonals share architecture, shaded off-diagonals are cross-architecture. Diagonal entries stay near chance level on both datasets for every backbone, confirming that \codename{}s do not transfer across weight variants; the off-diagonals likewise stay low, complementing Table~\ref{tab:fl_cross_matrix_one}.

\begin{table*}[!ht]
\centering
\caption{Model-specific non-transferability. All entries evaluate \codename{}s on weights different from the authorized model: diagonals share architecture, shaded off-diagonals are cross-architecture.}
\setlength{\tabcolsep}{2pt}
\renewcommand{\arraystretch}{0.9}

\resizebox{\textwidth}{!}{
\begin{tabular}{l c @{}c@{} ccccc c @{}c@{} ccccc}
\toprule
&
\multicolumn{7}{c}{CIFAR10} &&
\multicolumn{6}{c}{ImageNet} \\
\cmidrule(lr){3-8}
\cmidrule(lr){10-15}

&&
\multicolumn{1}{c}{} & ResNet-50 & ViT-B & SwinV2-T & DeiT-B & MambaVision-T &&
\multicolumn{1}{c}{} & ResNet-50 & ViT-B & SwinV2-T & DeiT-B & MambaVision-T \\
\midrule
ResNet-50 &
\multirow{5}{*}{\rotatebox[origin=c]{90}{ImageNet}} & \vrule &
 13.3\% & ~~\textcolor{gray!75}{9.4\%} & ~~\textcolor{gray!75}{8.2\%} & ~~\textcolor{gray!75}{9.4\%} & ~~\textcolor{gray!75}{7.8\%}   &
\multirow{5}{*}{\rotatebox[origin=c]{90}{CIFAR10}} & \vrule &
 ~~1.2\% & ~~\textcolor{gray!75}{0.0\%} & ~~\textcolor{gray!75}{4.4\%} & ~~\textcolor{gray!75}{0.0\%} & ~~\textcolor{gray!75}{0.0\%} \\
ViT-B  && \vrule &
 \textcolor{gray!75}{10.1\%} & ~~9.6\% & \textcolor{gray!75}{10.1\%} & \textcolor{gray!75}{11.5\%} & ~~\textcolor{gray!75}{9.8\%} && \vrule &
 ~~\textcolor{gray!75}{0.0\%} & ~~0.0\% & ~~\textcolor{gray!75}{0.0\%} & ~~\textcolor{gray!75}{0.0\%} & ~~\textcolor{gray!75}{0.0\%} \\
SwinV2-T && \vrule &
 \textcolor{gray!75}{10.5\%} & \textcolor{gray!75}{10.2\%} & 21.0\% & \textcolor{gray!75}{11.3\%} & \textcolor{gray!75}{14.5\%} && \vrule &
 ~~\textcolor{gray!75}{0.0\%} & ~~\textcolor{gray!75}{0.0\%} & ~~0.0\% & ~~\textcolor{gray!75}{0.0\%} & ~~\textcolor{gray!75}{0.0\%} \\
DeiT-B && \vrule &
 \textcolor{gray!75}{12.5\%} & ~~\textcolor{gray!75}{9.3\%} & ~~\textcolor{gray!75}{7.0\%} & 14.8\% & \textcolor{gray!75}{10.9\%} && \vrule &
 ~~\textcolor{gray!75}{0.0\%} & ~~\textcolor{gray!75}{0.0\%} & ~~\textcolor{gray!75}{0.0\%} & ~~0.0\% & ~~\textcolor{gray!75}{0.0\%} \\
MambaVision-T&& \vrule &
 \textcolor{gray!75}{14.5\%} & ~~\textcolor{gray!75}{5.1\%} & \textcolor{gray!75}{12.5\%} & ~~\textcolor{gray!75}{9.8\%} & ~~8.2\% && \vrule &
 ~~\textcolor{gray!75}{0.0\%} & ~~\textcolor{gray!75}{0.0\%} & ~~\textcolor{gray!75}{1.9\%} & ~~\textcolor{gray!75}{0.0\%} & ~~0.0\% \\
\bottomrule

\end{tabular}
}
\label{tab:fl_cross_matrix_cross_only}
\end{table*}

\myparagraph{Experiments on Language Models}
To probe whether the construction extends beyond vision, Table~\ref{table:glue_methods} reports GLUE performance for BERT-base and RoBERTa-base. Authorized \codename{} performance tracks the baseline within 1--2 points, while unauthorized accuracy drops below 50\% on most tasks; residual signal remains on metrics whose random baseline is non-trivial (\eg RoBERTa QQP at 63.2 and STSB at 63.5).

\begin{table*}[!ht]

\centering
\caption{Performance across GLUE benchmark for language models.}
\label{table:glue_methods}

\renewcommand{\arraystretch}{0.95}

\resizebox{0.98\linewidth}{!}{
\begin{threeparttable}
\begin{tabular}{l *{14}{c}}
\toprule
&
\multicolumn{7}{c}{BERT-base} &
\multicolumn{7}{c}{RoBERTa-base} \\
\cmidrule(lr){2-8}
\cmidrule(lr){9-15}

&
CoLA & MNLI & QNLI & QQP & RTE & SST2 & STSB &
CoLA & MNLI & QNLI & QQP & RTE & SST2 & STSB \\
\midrule
\rowcolor{gray!12}
Baseline &
54.2 & 83.4 & 90.5 & 90.1 & 60.3 & 91.6 & 87.1 &
53.8 & 87.7 & 92.8 & 90.9 & 66.1 & 94.5 & 87.5 \\
\codename{} (Ours) &
54.5 & 82.9 & 89.5 & 89.6 & 60.3 & 89.6 & 86.9 &
55.5 & 87.5 & 92.4 & 90.8 & 65.3 & 94.6 & 87.2 \\
Unauthorized &
33.4 & 31.8 & 50.5 & 36.8 & 47.3 & 50.9 & 60.2 &
37.1 & 35.4 & 49.5 & 63.2 & 52.7 & 49.1 & 63.5 \\
\bottomrule
\end{tabular}
\end{threeparttable}
}
\end{table*}

\subsection{Skin-Lesion Classification (HAM10000)}
\label{appendix:skin-lesion}
Figure~\ref{fig:ne_medical} contrasts clean and \codename{}-encoded HAM10000 dermoscopic images. The encoded inputs appear visually unstructured to a human observer while remaining fully usable to the authorized ViT, showing that the construction transfers to high-stakes, privacy-sensitive domains whose visual statistics differ from natural images.

\begin{figure}[!ht]
    \centering
    \includegraphics[width=0.6\linewidth]{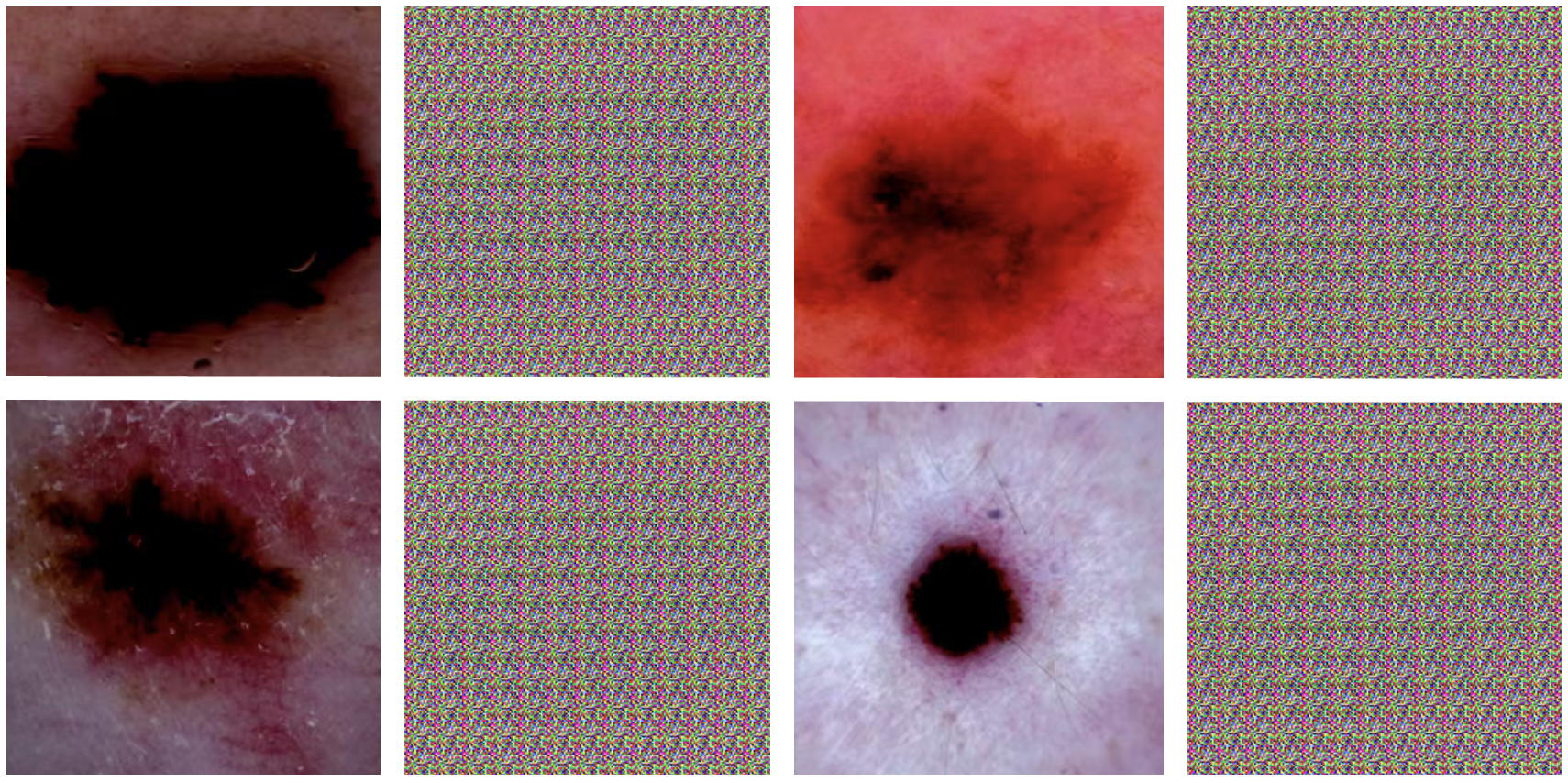}
    \caption{\codename{}s for HAM10000 dermoscopic skin lesions. Authorized usage retains 99.2\% accuracy on \codename{}s (on a test set with more than 80\% positives), while unauthorized models drop to 0.0\%.}
    \label{fig:ne_medical}
\end{figure}

\subsection{Limitations}
\label{appendix:limitations}
Our setting grants the defender white-box access to $\fauth$ and a probe source to estimate an insensitive subspace used for recoding. Against a \emph{method-aware} or \emph{parameter-aware} adversary, the perturbation itself becomes an attack surface. If the confining subspace (or a close approximation) is recovered, an input-side projector that reweights toward principal directions can \emph{partially} cancel the recoding and raise $\permet{\funa}{\tilde{x}'}{y}$. Linear projection is realistic since it requires only an estimate of the basis, not access to $\fauth$. However, recovery is imperfect in practice: acquisition and preprocessing generally do not commute with a fixed projector, and authorized benignity relies on $\fauth$'s internal representations rather than pure input-space orthogonality. Projection-back can thus reduce effect size but does not guarantee full restoration on arbitrary $\funa$.

A stronger adversary with training-time control can \emph{counter-adapt} by regularizing sensitivity (\eg encouraging larger singular components or Jacobian norms along estimated bases) so that insensitive directions shrink after training. This can recover unauthorized utility, especially when the task admits redundancy. The trade-off is empirical: making all directions sensitive tends to increase brittleness and harm calibration on common corruptions, but such side effects may be acceptable to an attacker optimizing only $\permet{\funa}{\tilde{x}'}{y}$. Other practical limits include dependence on a modest probe budget for the spectral basis, attenuation under aggressive acquisition pipelines or domain shift that changes early-layer geometry, and detectability when the basis is static. Finally, our analysis focuses on early layers and linearized views; extending guarantees to deeper Jacobians and temporally coupled modalities remains open.

\subsection{Runtime Breakdown}
\label{appendix:runtime}
Table~\ref{tab:runtime} reports per-backbone wall-clock measurements (10 runs). The one-time SVD ranges from 0.2\,ms (MambaVision-T) to 148.3\,ms (DeiT-B) and is amortized over all releases; per-image \codename{} encoding stays around $0.11$\,ms for a $224\times224\times3$ input across all backbones, negligible against a forward pass.

\myparagraph{FLOP derivation (ResNet-50)}
For a $224 \times 224$ RGB image with a ResNet-50 front end, the first convolution linearizes to $W \in \mathbb{R}^{64 \times (3\cdot 7\cdot 7)}$, so $n=147$ and the one-time SVD requires about $64\cdot 147^2 \approx 1.4 \times 10^6$ FLOPs. Per image the cost is dominated by the small matrix-vector product in the 147-D space and a single addition into the $3 \times 224 \times 224$ tensor, totaling roughly $\mathcal{O}(10^5)$ FLOPs, which is negligible relative to a standard ResNet-50 forward pass. Patch-based backbones ($n=768$ for ViT-B/DeiT-B) replace 147 with 768 in both terms; the encoding cost stays sub-millisecond per image (Table~\ref{tab:runtime}).

\section{Deferred Proofs from Section~\ref{sec:theory}}
\label{appendix:deferred-proofs}
We give four derivations corresponding to Section~\ref{sec:theory}, each self-contained given the standing assumptions of Section~\ref{sec:theory}: a Chebyshev tail on $\|z\|_2^2$ for first-layer retention (Appendix~\ref{appendix:proof-target-retention}); Gaussian anti-concentration on a one-dimensional projection of $z$ unioned with the retention failure event for cross-model degradation (Appendix~\ref{appendix:proof-cross-model-degradation}); and layer-wise induction joined by a union bound on the same draw $z$ for the network-wide bounds (Appendices~\ref{appendix:proof-network-auth},~\ref{appendix:proof-network-unauth}).

\subsection{\texorpdfstring{Full Proof of Theorem~\ref{thm:target-retention}: Chebyshev Tail on $\|z\|_2^2$}{Full Proof of Theorem 1: Chebyshev Tail on z-norm squared}}
\label{appendix:proof-target-retention}
The deterministic step in the main text gives $\|W\delta\|_2 \leq \tau\,\|z\|_2$. We complete the proof by bounding $\|z\|_2$ in probability.
For $z \sim \mathcal{N}(0, \sigma^2 I_k)$, $\mathbb{E}[\|z\|_2^2] = k\sigma^2$ and $\mathrm{Var}(\|z\|_2^2) = 2k\sigma^4$, so Chebyshev's inequality gives $\mathbb{P}[\|z\|_2^2 \geq k\sigma^2 + t] \leq 2k\sigma^4/t^2$. Therefore, with probability at least $1 - 2k\sigma^4/t^2$,
\begin{gather*}
    \|W\tilde{x} - Wx\|_2 = \|W\delta\|_2 \leq \tau\, \|z\|_2 < \tau\, \sqrt{k\sigma^2 + t},
\end{gather*}
which concludes the proof.

\subsection{\texorpdfstring{Full Proof of Theorem~\ref{thm:cross-model-degradation}: Gaussian Anti-Concentration on $p_1^\top z$}{Full Proof of Theorem 2: Gaussian Anti-Concentration}}
\label{appendix:proof-cross-model-degradation}
The main-text proof establishes
\begin{align*}
    \|W' \delta\|_2 \geq s'_{\min} \, \|(V'^{(l)})^\top \Vsmall z\|_2.
\end{align*}
We complete the proof by lower-bounding $\|(V'^{(l)})^\top \Vsmall z\|_2$ in probability.

Let $A := (V'^{(l)})^\top \Vsmall \in \mathbb{R}^{(n-k) \times k}$, and let $u_1, p_1$ be the top left/right singular vectors of $A$, so that $u_1^\top A = \|A\|_2 \, p_1^\top$. Then
\begin{align*}
    u_1^\top A z = \|A\|_2 \, p_1^\top z, \qquad p_1^\top z \sim \mathcal{N}(0, \sigma^2),
\end{align*}
since $\|p_1\|_2 = 1$ and $z \sim \mathcal{N}(0, \sigma^2 I_k)$. The density of $\mathcal{N}(0,\sigma^2)$ is bounded above by $1/(\sigma\sqrt{2\pi})$, so for any $c \in (0, \sqrt{\pi/2}]$,
\begin{align*}
    \Pr\bigl[|p_1^\top z| < c\sigma\bigr] \;\leq\; 2c\sigma \cdot \frac{1}{\sigma\sqrt{2\pi}} \;=\; c\sqrt{\tfrac{2}{\pi}},
\end{align*}
the standard Gaussian anti-concentration bound. Hence with probability at least $1 - c\sqrt{2/\pi}$,
\begin{align*}
    \|A z\|_2 \;\geq\; |u_1^\top A z| \;=\; \|A\|_2\, |p_1^\top z| \;\geq\; c\sigma\, \|A\|_2,
\end{align*}
where the first inequality is obtained by projecting onto the unit vector $u_1$. By the misalignment condition (standing assumption), $\|A\|_2 \geq \alpha$, so $\|Az\|_2 \geq c\alpha\sigma$, and therefore
\begin{align*}
    \|W' \delta\|_2 \geq s'_{\min}\, c\alpha\sigma.
\end{align*}
Taking the difference with the $W$ bound from Theorem~\ref{thm:target-retention} (which holds with probability at least $1 - 2k\sigma^4/t^2$) and union-bounding the two failure events yields Eq.~\eqref{eq:cross-model-bound}.

\subsection{Full Proof of Proposition~\ref{prop:network-wide}(a): Layer-Wise Induction}
\label{appendix:proof-network-auth}
Following the standing decomposition $\fauth = g_L \circ \cdots \circ g_2 \circ g_1$ with $g_1 = W$ the first linear operator, let $h_i = g_i \circ \cdots \circ g_1(x)$ and $\tilde{h}_i = g_i \circ \cdots \circ g_1(\tilde{x})$. By induction on $i$, we show $\|\tilde{h}_i - h_i\|_2 \leq (\prod_{j=2}^i L_j^\star) \cdot \tau \sqrt{k\sigma^2 + t}$.

For the base case $i=1$, Theorem~\ref{thm:target-retention} gives
\begin{align*}
\|\tilde{h}_1 - h_1\|_2 = \|\Wauth\delta\|_2 < \tau \sqrt{k\sigma^2 + t}.
\end{align*}
Now assume the claim holds for $i-1$. Since $g_i$ is $L_i^\star$-Lipschitz on the trajectory pair,
\begin{align*}
    \|\tilde{h}_i - h_i\|_2
    \leq L_i^\star \|\tilde{h}_{i-1} - h_{i-1}\|_2
    \leq \left(\prod_{j=2}^i L_j^\star\right) \tau \sqrt{k\sigma^2 + t}.
\end{align*}
At layer $L$, $\|\tilde{h}_L - h_L\|_2 = \|\fauth(\tilde{x}) - \fauth(x)\|_2$, so by induction
\begin{equation}
\|\fauth(\tilde{x}) - \fauth(x)\|_2 \;\leq\; \Lipprod\,\tau\sqrt{k\sigma^2 + t},
\label{eq:network-auth-bound}
\end{equation}
which is Proposition~\ref{prop:network-wide}(a).

\subsection{Full Proof of Proposition~\ref{prop:network-wide}(b): Layer-Wise Induction and Joint-Failure Union Bound}
\label{appendix:proof-network-unauth}
By Theorem~\ref{thm:cross-model-degradation}, $\|W'\delta\|_2 \geq c\alpha\, s'_{\min} \sigma$ with probability at least $1 - c\sqrt{2/\pi}$. Under the standing co-Lipschitz assumption (each $g'_i$ is $\nu_i^{\prime\star}$-co-Lipschitz on the trajectory pair), composition through $g'_2,\ldots,g'_L$ with constant $\nuprod$ propagates the lower bound, yielding
\begin{align*}
\|\funa(\tilde{x}) - \funa(x)\|_2 \;\geq\; \nuprod \, \|W'\delta\|_2 \;\geq\; \nuprod\, c\alpha\, s'_{\min} \sigma.
\end{align*}
Proposition~\ref{prop:network-wide}(a) gives $\|\fauth(\tilde{x}) - \fauth(x)\|_2 \leq \Lipprod \tau \sqrt{k\sigma^2 + t}$ with probability at least $1 - 2k\sigma^4/t^2$. Both events are functions of the same draw $z$; a union bound on their failure masses gives a joint failure mass of at most $2k\sigma^4/t^2 + c\sqrt{2/\pi}$, and taking the difference on the joint success event yields
\begin{equation}
\|\funa(\tilde{x}) - \funa(x)\|_2 - \|\fauth(\tilde{x}) - \fauth(x)\|_2 \;\geq\; \nuprod\,c\alpha\,s'_{\min}\,\sigma - \Lipprod\,\tau\sqrt{k\sigma^2+t}.
\label{eq:network-unauth-bound}
\end{equation}

\section{Threat Model and Deployment Justifications}
\label{appendix:threat-model-justifications}

\subsection{Multi-Client Deployment}
\label{appendix:multi-client}
A data owner serving multiple authorized client models (\eg different tenants or product endpoints) can deploy \codename{}s in two natural ways. \emph{Per-client}: for each $f_i$, estimate its insensitive subspace, define $\mathcal{T}_{f_i}$, and release $\tilde{x}_i = \mathcal{T}_{f_i}(x)$ to that client. This is the cleanest form of model-specificity, since every other model is unauthorized, and it aligns with access-control practice: distribution is keyed by client identity, and revocation is handled by stopping issuance or rotating target-specific parameters, without changing downstream training or inference. \emph{Shared}: build a single transformation by approximating a joint insensitive subspace (\eg intersecting or averaging low-sensitivity directions across the group). This simplifies storage and distribution, but the feasible subspace shrinks as more models are included, tightening the retention/degradation trade-off and possibly requiring different $\tau, \sigma$ to preserve all authorized utilities. Characterizing this trade-off and designing robust joint estimators is left to future work.

\subsection{Justification of White-Box Access}
\label{appendix:wb-justification}
Three regulatory bases concretize the ``delegated auditing'' paradigm for white-box access. The EU AI Act (Annex IV) requires deployers to access ``parameter weighting tables'' to verify data relevance~\cite{EU_AI_Act_2024}. Financial regulations (SR 11-7) prohibit reliance on opaque vendor models and mandate internal validation~\cite{sr11_7}. GDPR Art.~15(1)(h) grants data subjects access to ``meaningful information about the logic'' of automated decision-making, which entails internal model visibility~\cite{gdpr_1}.

\subsection{On Visual Interpretability of \texorpdfstring{$\tilde{x}$}{x-tilde}}
\label{appendix:visual-effect}
Our design deliberately makes $\tilde{x}$ uninformative to humans. A human-readable $\tilde{x}$ would carry task-relevant information in a generic, model-agnostic form usable by arbitrary downstream models; \codename{}s instead retain utility through a channel tied to $\fauth$. In this sense, $\tilde{x}$ is a task-level ``ciphertext'' that only the authorized model can reliably ``decode''. This targets settings where released data are consumed programmatically and routinely replicated, cached, or shared across components, such as MLaaS inference, automated moderation, large-scale indexing, and document QA/OCR back-ends. When human review is required, the data owner can retain $x$ within the trusted boundary and release only $\tilde{x}$ externally. Figure~\ref{fig:sheep} illustrates the visual effect of increasing perturbation strength on a sample image.

\begin{figure}[!ht]
    \centering
    \includegraphics[width=\linewidth]{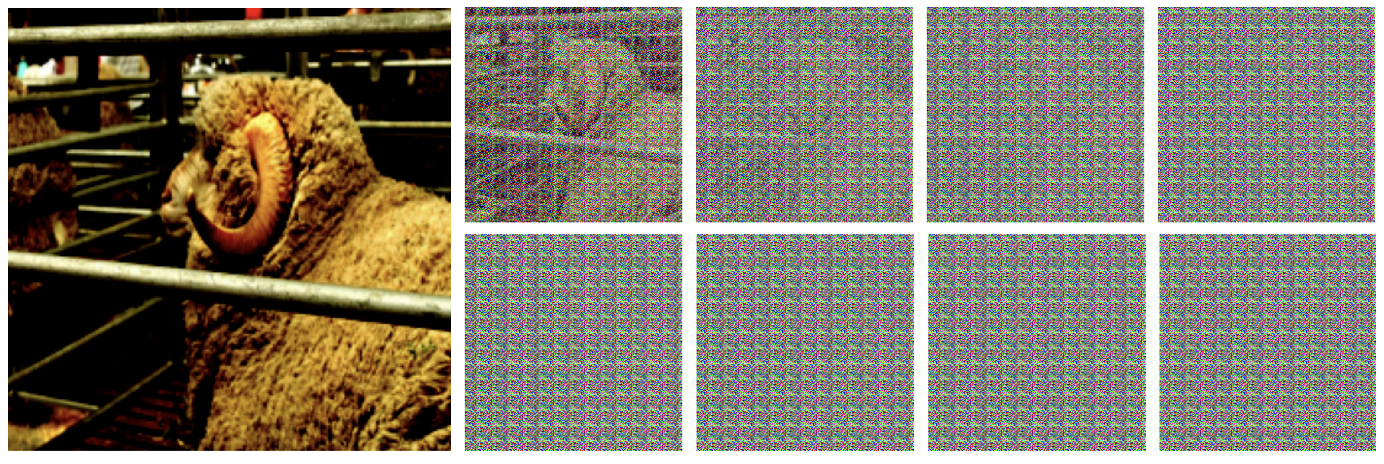}
    \caption{Visual examples under increasing perturbation strength. Authorized inference remains stable; even at 0\,{dB}, ResNet-50 drops by only 0.1\% top-1 accuracy on ImageNet.}
    \label{fig:sheep}
\end{figure}

\subsection{Robustness Against Preprocessing in VLMs}\label{subsec:vlm_preproc}
Vision-language models apply multi-stage, model-specific preprocessing that can scramble input-space recoding before it reaches early features, which makes inference-time usage control difficult. In practice, images pass through a typical pipeline: decoding and float conversion, aspect-preserving resize with letterbox padding to model-specific canvases (\eg 448, 512, 896, 1024), optional cropping, CLIP- or EVA-style channel normalization, patch or tile partitioning into visual tokens, and projection into the language-model embedding space with resolution-dependent positional encodings. Implementation details such as multi-image packing, interpolation choice, and JPEG rounding vary further. InternVL3 and Qwen2.5-VL differ in exact choices but share this pipeline structure. \codename{} remains robust across all six MMBench axes (AR, CP, FP-C, FP-S, LR, RR): authorized performance for InternVL3 is essentially unchanged and unauthorized utility for Qwen2.5-VL remains low (Table~\ref{tab:vlm}, Figure~\ref{fig:vlm}).

\begin{figure*}[!ht]
    \centering
    \includegraphics[width=\linewidth]{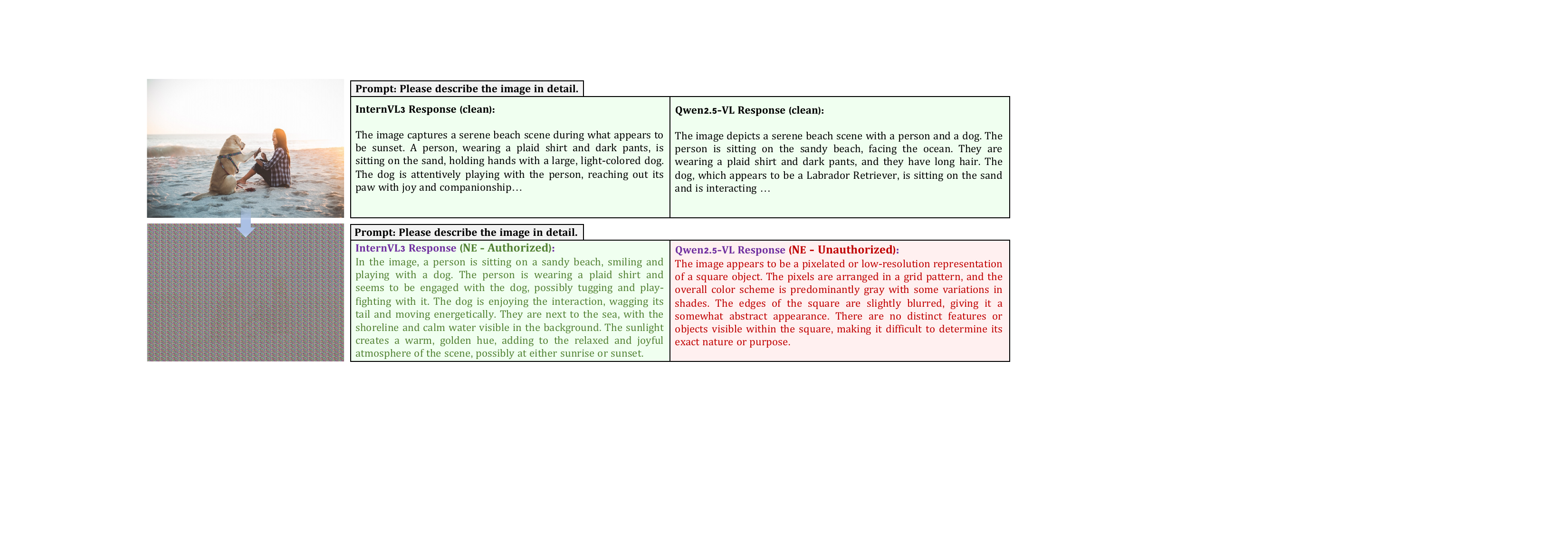}
    \caption{Illustrative visualization of effective data authorization on VLMs. \codename{}s are usable only by the authorized model.} \label{fig:vlm}
\end{figure*}

\begin{figure*}[!ht]
    \centering
    \includegraphics[width=\linewidth]{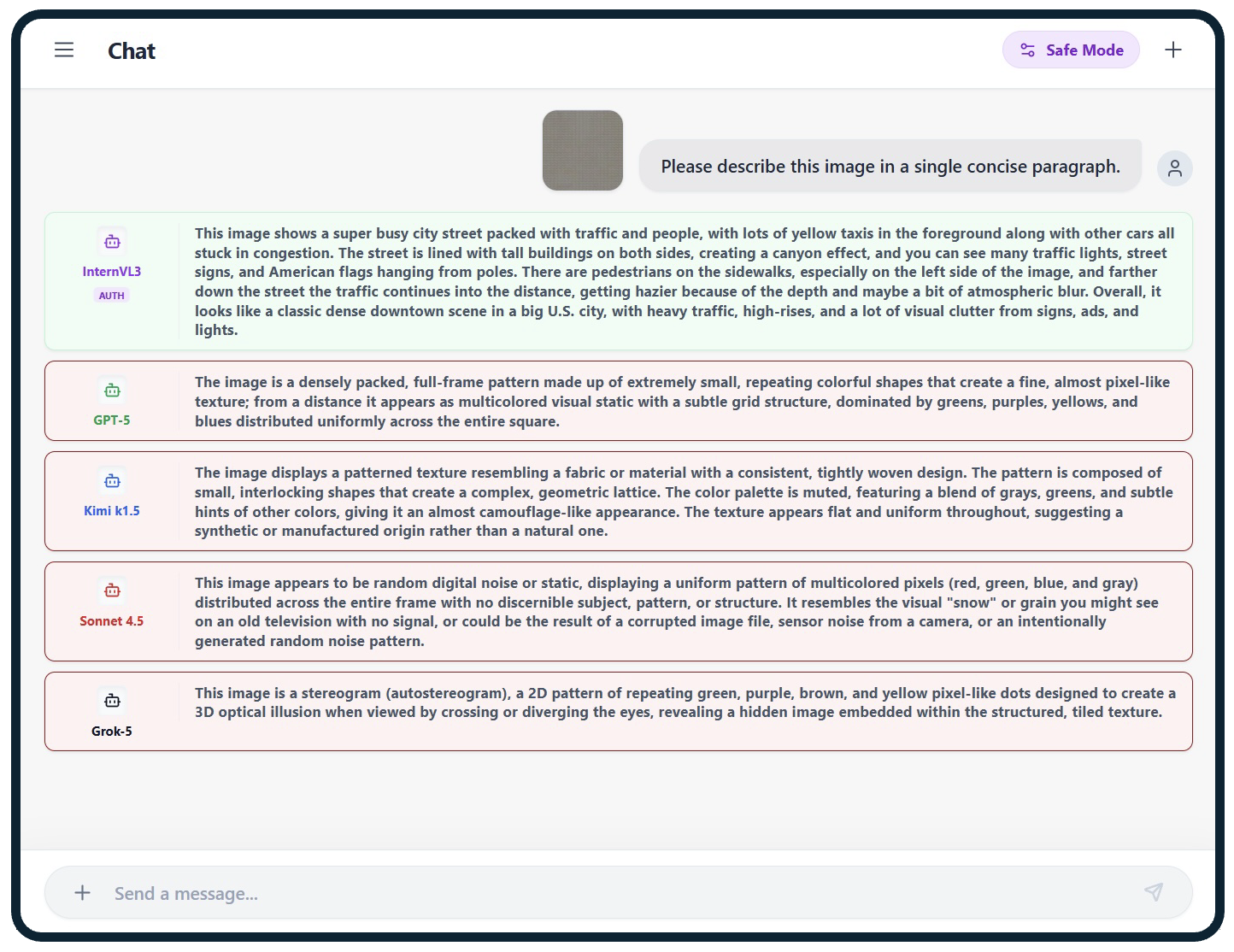}
    \caption{Illustrative visualization of effective data authorization on VLMs. \codename{}s are usable only by the authorized model.} \label{fig:vlm_safe}
\end{figure*}

\begin{figure*}[!ht]
    \centering
    \includegraphics[width=\linewidth]{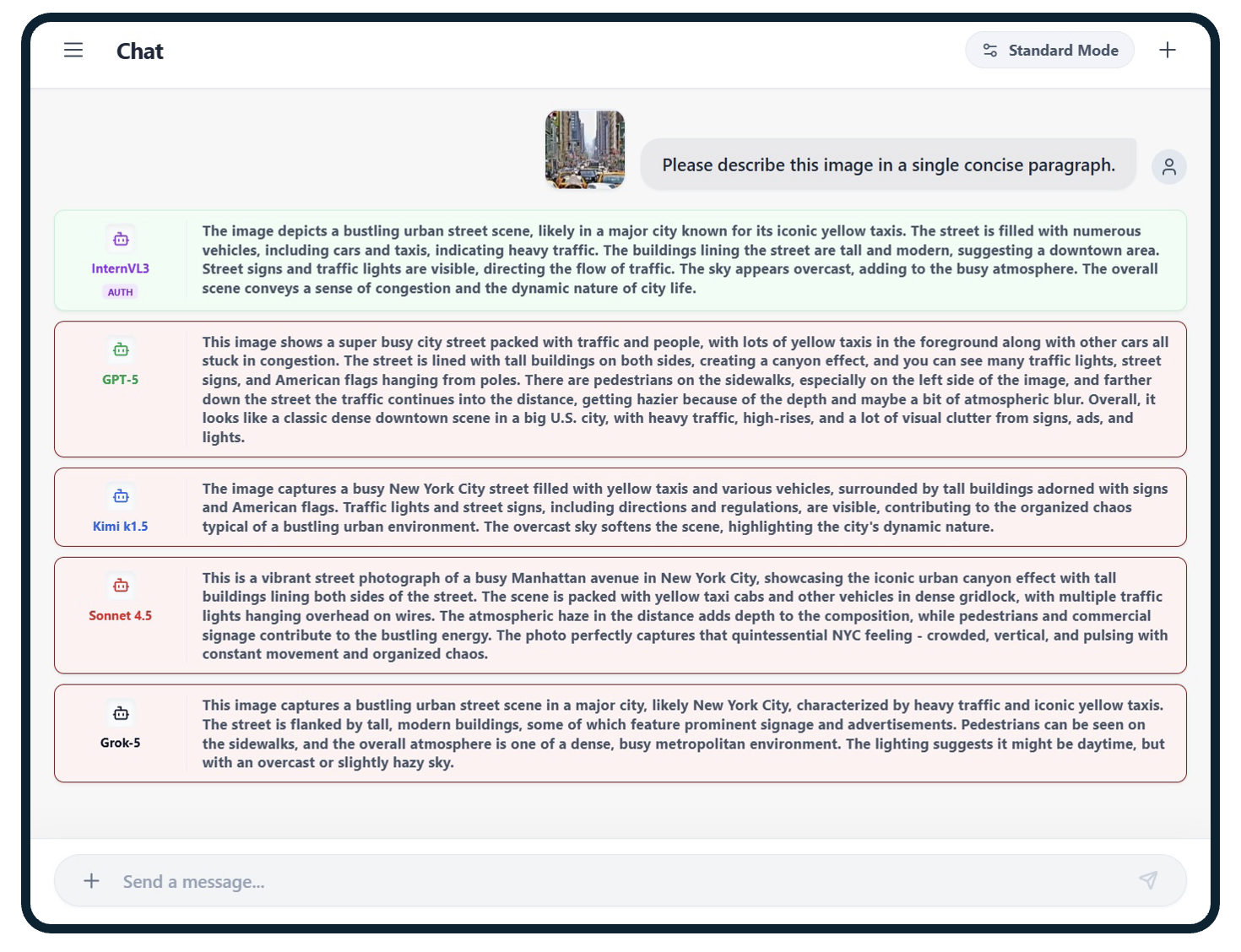}
    \caption{Clean inputs remain interpretable by arbitrary downstream (unauthorized) models.} \label{fig:vlm_standard}
\end{figure*}

\subsection{Reconstruction Attack}\label{subsec:reconstruction}

\begin{table}[!ht]

\centering
\caption{Super-resolution reconstruction attacks.}
\label{tab:denoise_results}

\setlength{\tabcolsep}{8pt}
\renewcommand{\arraystretch}{1.05}

\resizebox{0.45\linewidth}{!}{
\begin{threeparttable}
\begin{tabular}{lccc}
\toprule
Method & $\tilde{x}$ & SR-ResNet\tnote{1} & SR-ResNet\tnote{2} \\
\midrule
Gaussian                    & 15.8 & 33.5 & 35.9 \\
\codename{}s          & 10.2 & 10.5 & 10.9 \\
\codename{}s\tnote{3} & 10.7 & 10.8 & 11.1 \\
\bottomrule
\end{tabular}
\begin{tablenotes}\footnotesize
    \item[1] Noise2Noise: train on recoded $\to$ recoded pairs.
    \item[2] Noise2Clean: train on recoded $\to$ clean pairs.
    \item[3] Attacker has the white box access to the model.
\end{tablenotes}
\end{threeparttable}
}
\end{table}

Table~\ref{tab:denoise_results} reports SR-ResNet reconstruction attempts at a strengthened recoding setting ($\tilde{x}\approx 10.2$\,dB, below the 20\,dB default) chosen to give the attacker more signal. Gaussian noise at $\tilde{x}\approx 15.8$\,dB is largely removable (33.5--35.9\,dB after SR-ResNet), while \codename{}s resist recovery in both black-box (Noise2Noise) and white-box (Noise2Clean) regimes, staying near the input level (about 10--11\,dB) including the strongest white-box variant. Recoding therefore remains empirically non-invertible to learned super-resolution attackers, with authorized performance essentially unchanged.

\end{document}